\newtheorem{assumption}{Assumption}
\newtheorem{prop}{Proposition}[section]
\newtheorem{corollary}{Corollary}[section]
\newtheorem{lemma}{Lemma}[section]
\icmltitlerunning{Decoupled Greedy Learning of CNNs}
\begin{document}

\twocolumn[
\icmltitle{Decoupled Greedy Learning of CNNs}

% It is OKAY to include author information, even for blind
% submissions: the style file will automatically remove it for you
% unless you've provided the [accepted] option to the icml2020
% package.

% List of affiliations: The first argument should be a (short)
% identifier you will use later to specify author affiliations
% Academic affiliations should list Department, University, City, Region, Country
% Industry affiliations should list Company, City, Region, Country

% You can specify symbols, otherwise they are numbered in order.
% Ideally, you should not use this facility. Affiliations will be numbered
% in order of appearance and this is the preferred way.

\begin{icmlauthorlist}
\icmlauthor{Eugene Belilovsky}{mila}
\icmlauthor{Michael Eickenberg}{ucb}
\icmlauthor{Edouard Oyallon}{cnrs}
\end{icmlauthorlist}

\icmlaffiliation{mila}{MILA}
\icmlaffiliation{ucb}{University of California, Berkeley}
\icmlaffiliation{cnrs}{CNRS, LIP6}

\icmlcorrespondingauthor{Eugene Belilovsky}{eugene.belilovsky@mila.quebec}

% You may provide any keywords that you
% find helpful for describing your paper; these are used to populate
% the "keywords" metadata in the PDF but will not be shown in the document
\icmlkeywords{greedy training, parallel, alternatives to backpropagation}

\vskip 0.3in
]

% this must go after the closing bracket ] following \twocolumn[ ...

% This command actually creates the footnote in the first column
% listing the affiliations and the copyright notice.
% The command takes one argument, which is text to display at the start of the footnote.
% The \icmlEqualContribution command is standard text for equal contribution.
% Remove it (just {}) if you do not need this facility.

%\printAffiliationsAndNotice{}  % leave blank if no need to mention equal contribution
\printAffiliationsAndNotice{} % otherwise use the standard text.

\begin{abstract}
A commonly cited inefficiency of neural network training by back-propagation is the \textit{update locking} problem: each layer must wait for the signal to propagate through the full network before updating. Several alternatives that can alleviate this issue have been proposed. 
In this context, we consider a simpler, but more effective, substitute that uses minimal feedback,
which we call
\emph{Decoupled Greedy Learning (DGL)}.
It is based on a greedy relaxation of the joint training objective, recently shown to be effective in the context of Convolutional Neural Networks (CNNs) on large-scale image classification. We consider an optimization of this objective that permits us to decouple the layer training, allowing for layers or modules in networks to be trained with a potentially linear parallelization in layers. With the use of a replay buffer we show this approach can be extended to asynchronous settings, where modules can operate with poossibly large communication delays. We show theoretically and empirically that this approach converges. Then, we empirically find that it can lead to better generalization than sequential greedy optimization.  We demonstrate the effectiveness of DGL against alternative approaches  on the CIFAR-10 dataset and on the large-scale ImageNet dataset. 
\end{abstract}
\begin{figure*}
    \centering
    \includegraphics[width=1.0\linewidth]{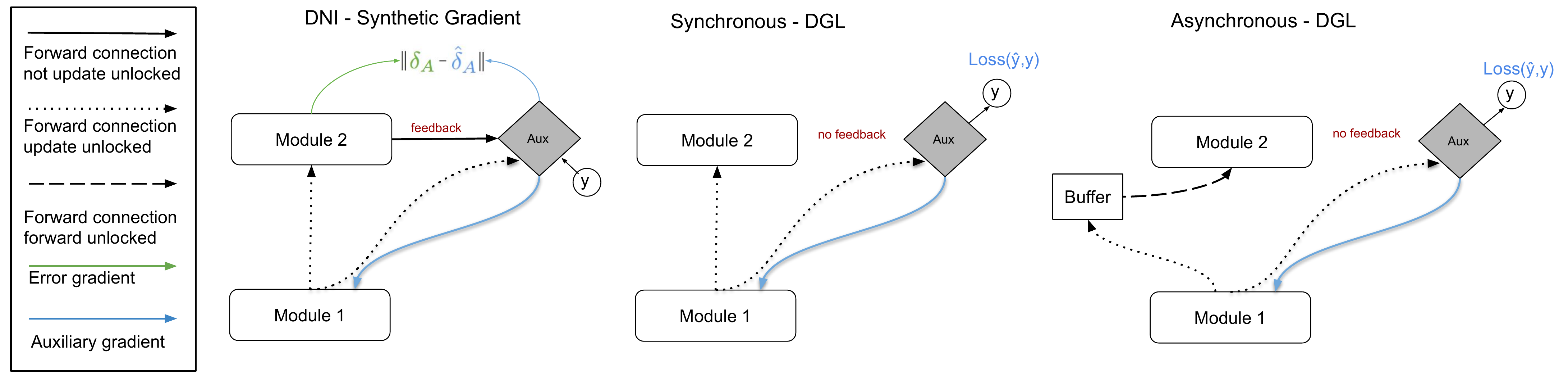}
    \caption{Comparison of DNI, Synchronous and Asynchronous DGL. Note in DGL subsequent modules do not provide feedbacks thus removing dependencies of the auxiliary network. Asynchronous DGL allows achieving forward unlocking.}
    \label{fig:dni_dgl}
    \vspace{-13pt}
\end{figure*}
\vspace{-5pt}
\section{Introduction}

% This paragraph introduces the backward/forward/update locking problems
% with backprop, it would be good to rephrase this to make sure we are talking about end-to-end backprop
Jointly training all layers using back-propagation is the standard method for learning neural networks, including the computationally intensive Convolutional Neural Networks (CNNs). 
%\citep{im1hr}. 
Due to the sequential nature of gradient processing, standard back-propagation has several well-known inefficiencies that prohibit parallelization of the computations of the different constituent modules. \citet{jaderberg2016decoupled} characterize these in order of severity as the 
\textit{forward}, 
\textit{update}, and 
\textit{backward} 
\textit{locking} problems. 
Backward 
\textit{\textbf{un}locking}
would permit updates of all modules once 
forward
signals have propagated to all subsequent modules, update 
\textit{\textbf{un}locking}
would permit updates of a module \textit{before} a signal has reached all subsequent modules, and forward 
\textit{\textbf{un}locking}
would permit a module to operate asynchronously from its predecessor and dependent modules. 

% This paragraph cites literature pertaining to addressing the locking problems
%Multiple m
Methods 
addressing backward locking to a certain degree have been proposed in \citep{ddg,Huo2018,choromanska2018beyond,direct_feedback}. However, update locking is a far more severe inefficiency. Thus \citet{jaderberg2016decoupled} and \citet{czarnecki2017} propose and analyze Decoupled Neural Interfaces (DNI), a method that
% addresses this more challenging problem. The DNI approach 
uses an auxiliary network to predict the gradient of the backward pass directly from the input. This method
unfortunately does not scale well computationally or in terms of accuracy, especially in the case of CNNs \citep{Huo2018,ddg}. Indeed, auxiliary networks must predict a  weight gradient, usually high dimensional for larger models and input sizes.

% This paragraph introduces greedy training, mentions that it has had some recent interest, and mentions that
% one can use such a greedy approach/idea to create a solution to the locking
A major 
obstacle to
update unlocking 
is the heavy reliance on
the upper modules for feedback.
%feedback from upper modules. 
Several works have recently revisited the classic \citep{ivakhnenko,bengio2007greedy} approach of supervised greedy layerwise training of neural networks \citep{huang2017learning,Marquez}. In \citet{shallow}  it is shown that such an approach, which relaxes the joint learning objective, and does not require global feedback, can lead to high-performance deep CNNs on large-scale datasets. %These works also rely on the use of local auxiliary networks as in DNI, but it's role is more straightforward and its design more scalable. 
We will show that the greedy learning objective used in these papers can be solved with an alternative optimization algorithm, which permits decoupling the computations and 
achieves update unlocking. It can be augmented with replay buffers \citep{lin1992self} to permit 
forward unlocking which is a challenge not effectively addressed by any of the prior work. This simpler strategy can be shown to be a superior baseline for parallelizing the training across modules of a neural network.

% This paragraph lists our contributions
 The paper is structured as follows. In Sec. 2 we propose an optimization procedure for a decoupled greedy learning objective that achieves \textit{update unlocking} and then extend it to an asynchronous setting (async-DGL) using a replay buffer, addressing \textit{forward unlocking}. In Sec. 3 we show that the proposed optimization procedure converges and recovers standard rates of non-convex optimization, motivating empirical observations in the subsequent experimental section. In Sec. 4 we  show that DGL can outperform competing methods in terms of scalability to larger and deeper models and stability to optimization hyperparameters and overall parallelism, allowing it to be applied to large datasets such as ImageNet. We extensively study async-DGL and find that it is robust to significant delays. We also empirically study the impact of parallelized training on convergence. Code for experiments is included in the submission.
 \vspace{-4pt}

\vspace{-2pt}
\section{Parallel Decoupled Greedy Learning}

In this section we %will 
formally define the greedy objective and parallel optimization which we %will 
study in both the synchronous and asynchronous setting. We %will 
mainly consider the online setting and assume a stream of samples or mini-batches denoted $\mathcal{S}\triangleq\{(x_0^t,y^t)\}_{t\leq T}$, run during $T$ iterations.
\subsection{Preliminaries}
For comparison purposes, we briefly review the update unlocking approach from  DNI \citep{jaderberg2016decoupled}. There, each network module has an associated \textit{auxiliary net} which, given the output activation of the module, predicts the gradient signal from subsequent modules: the module can thus perform an update while modules above are still forward processing. The DNI auxiliary model is trained by using true gradients provided by upper modules when they become available, requiring activation caching. This also means that the auxiliary module can become out of sync with the changing output activation distribution, often requiring slow learning rates. Due to this and the high dimensionality of the predicted gradient which scales with module size, this estimate is challenging. One may ask how well a method that entirely avoids the use of feedback from upper modules would fare given similarly-sized auxiliary networks. We will show that adapting the objective in \citep{shallow,bengio2007greedy} can also allow for update unlock and a degree of forward unlocking, with better properties.
\vspace{-10pt}
\setlength{\textfloatsep}{1pt}
    \begin{algorithm2e}\small
    \caption{Synchronous DGL}\label{algo:basic}
  \SetAlgoLined
  \DontPrintSemicolon
\KwIn{Stream $\mathcal{S}\triangleq\{(x_0^t,y^t)\}_{t\leq T}$ of samples or mini-batches.}
\textbf{Initialize} Parameters $\{\theta_j,\gamma_j\}_{j\leq J}$.\;
\For { $(x_0^t,y^t) \in \mathcal{S}$ }%$\{x_0^t,y^t\}_{t\leq T}$}
{
\For {$j \in 1,..., J$}
   {$x^t_j \leftarrow f_{\theta_{j-1}}(x^t_{j-1})$.\;
Compute $\nabla_{(\gamma_j,\theta_j)}\hat{ \mathcal{L}}(y^t,x^t_j;\gamma_j,\theta_j)$.\;
$(\theta_j,\gamma_j)\leftarrow$Update parameters $(\theta_j,\gamma_j)$.
   }
 }
\end{algorithm2e}\vspace{-20pt}\begin{algorithm2e}\small
 \SetAlgoLined
  \DontPrintSemicolon
    \KwIn{Stream $\mathcal{S}\triangleq\{(x_0^t,y^t)\}_{t\leq T}$;  Distribution of the delay $p=\{p(j)\}_{j}$; Buffer size $M$.}
 \textbf{Initialize:} Buffers $\{B_j\}_{j}$; params $\{\theta_j,\gamma_j\}_{j}$.\\
 \While{\normalfont{\textbf{ training}}}{Sample $j$ in $\{1,...,J\}$ following $p$.\\
  %\algorithmicif{ $ j=1$} \algorithmicthen{ $ (x_{0},y)\gets \mathcal{S}$} \algorithmicelse{ $(x_{j-1},y)\gets B_{j-1}$}\;
   \uIf{$j=1$}{ $ (x_{0},y)\gets \mathcal{S}$}\Else{ $(x_{j-1},y)\gets B_{j-1}$.}
   $x_j \leftarrow f_{\theta_{j-1}}(x_{j-1})$.\;
    Compute $\nabla_{(\gamma_j,\theta_j)}\hat{ \mathcal{L}}(y,x_j;\gamma_j,\theta_j)$.\;
     $(\theta_j,\gamma_j)\leftarrow$ Update parameters $(\theta_j,\gamma_j)$.\;
    \lIf{$j<J$}{
    $B_{j} \gets (x^{j},y)$.
           }}
    \caption{Asynchronous DGL with Replay\label{algo:buffer_sym}}\end{algorithm2e}
    
% true gradients do need to eventually propagate back, introducing complex dynamics between the auxiliary module as well as memory overhead. Furthermore, a weight gradient is a high-dimensional target and it might require a high-capacity network to effectively learn the computations from the full loop.

% In contrast to this, we consider the case of using no feedback  adapting the objective in \citep{shallow,bengio2007greedy}. We adapt this idea to allow for update unlock which does not require a separate training step for an auxiliary module. 

%Clear to understand description of DNI
%Estimate of the gradient is a very hard to estimate accurately due to high dimensionality and due to delay between the module's output and the update of the auxiliary gradient estimator. Requires to scale the auxiliary network output with the model and input size.
%One may ask however how far can one go by using a similar auxiliary model but not relying on feedback from the modules above. We will show that adapting the objective in \citep{shallow,bengio2007greedy} can allow for update unlock, with better properties.
%without just using feedback only from the auxiliary module which is updated using a direct path to the target. 

\vspace{-20pt}
\subsection{Optimization for Greedy Objective} \label{sec:main}

%We  consider the greedy objective and an algorithm to optimize it. 
Let $\bm{X}_0$ and $Y$  be the data and labels, $\bm{X_j}$ be the output representation for module $j$. We will denote the per-module objective function $\hat{\mathcal{L}}(\bm{X}_{j}, Y;\theta_{j},\gamma_{j})$, where the parameters $\theta_j$ correspond to the module parameter (i.e. $\bm{X}_{j+1} =f_{\theta_{j}}(\bm{X}_{j})$). Here $\gamma_j$ represents parameters of a auxiliary networks used to predict the final target and compute the local objective. $\hat{\mathcal{L}}$ in our case will be the empirical risk with a cross-entropy loss. The greedy training objective is thus given recursively by defining $P_j$:
\begin{equation}
\min_{\theta_j,\gamma_j}\hat{\mathcal{L}}(\bm{X}_{j}, Y;\theta_{j},\gamma_{j})\tag{$P_j$}\label{eq:dap},
%\vspace{-5pt}
\end{equation}
where $\bm{X}_{j} =f_{\theta_{j-1}^*}(\bm{X}_{j-1})$ and $\theta_{j-1}^*$ is the minimizer of Problem  
%$\eqref{eq:dap}$
(\(P_{j-1}\)). 
A natural way to solve the optimization problem for $J$ modules, $(P_J)$, is thus by sequentially solving the problems $\{P_{j}\}_{j\leq J}$ starting with $j=1$. This is the approach taken in e.g. \citet{Marquez,huang2017learning,bengio2007greedy,shallow}. 
Here we consider an alternative procedure for optimizing the same objective, which we refer to as Sync-DGL. It is outlined in Alg \ref{algo:basic}. In Sync-DGL individual updates of each set of parameters are performed in parallel across the different layers. Each layer processes a sample or mini-batch, then passes it to the next layer, while simultaneously performing an update based on its own local loss. Note that at line $5$ the subsequent layer can already begin computing line $4$. Therefore, this algorithm achieves update unlocking. Once $x_j^t$ has been computed,   subsequent layers can begin processing. Sync-DGL can also be seen as a generalization of the biologically plausible learning method proposed in concurrent work \citep{nokland2019training}.
Appendix~\ref{appendix:pseudo} also gives an explicit version of an equivalent multi-worker pseudo-code. Fig.~\ref{fig:dni_dgl} illustrates the decoupling compared to how samples are processed in the DNI algorithm. 

 In this work we solve the sub-problems $P_j$ by backpropagation, but we note that any iterative solver available for $P_j$ will be applicable (e.g. \cite{choromanska2018beyond}).
 Finally we emphasize that unlike the sequential solvers of (e.g. \citet{bengio2007greedy,shallow}) the distribution of inputs to each sub-problem solver changes over time, resulting in a learning dynamic whose properties have never been studied nor contrasted with sequential solvers.
%\mynotes{We remark that another interpretation of this is as an alternating iterative procedure  to update $(\theta_j,\gamma_j)$ and $X_j$ for solving Problem $\eqref{eq:dap}$.} 
%Sec. \ref{sec:asynch} will consider a version of the algorithm that can be made asynchronous by introducing a replay buffer.

\vspace{-4pt}
\subsection{Asynchronous DGL with Replay} \label{sec:asynch}

We can now extend this framework to address
\textit{forward unlocking} \citep{jaderberg2016decoupled}. DGL 
modules already do not depend on their successors for updates. 
We can further reduce dependency on the previous modules such that they can operate asynchronously. 
This is achieved via a replay buffer that is shared between adjacent modules, enabling them to reuse older samples. 
%It can be beneficial in 
Scenarios with communication delays or substantial variations in speed between layers/modules
benefit from this. 
We study one instance of such an algorithm that uses a replay buffer of size $M$, shown in Alg. \ref{algo:buffer_sym} and illustrated in Fig.~\ref{fig:dni_dgl}. 
%\begin{wrapfigure}{L}{0.52\textwidth}

    %\begin{minipage}{\textwidth}
  %\begin{minipage}[b]{0.44\textwidth}
   %\removelatexerror % don't ask me what this is
\if False
    \begin{algorithm2e}\small
    \caption{Synchronous DGL}\label{algo:basic}
  \SetAlgoLined
  \DontPrintSemicolon
\KwIn{Stream $\mathcal{S}\triangleq\{(x_0^t,y^t)\}_{t\leq T}$ of samples or mini-batches.}
\textbf{Initialize} Parameters $\{\theta_j,\gamma_j\}_{j\leq J}$.\;
\For { $(x_0^t,y^t) \in \mathcal{S}$ }%$\{x_0^t,y^t\}_{t\leq T}$}
{
\For {$j \in 1,..., J$}
   {$x^t_j \leftarrow f_{\theta_{j-1}}(x^t_{j-1})$.\;
Compute $\nabla_{(\gamma_j,\theta_j)}\hat{ \mathcal{L}}(y^t,x^t_j;\gamma_j,\theta_j)$.\;
$(\theta_j,\gamma_j)\leftarrow$Update parameters $(\theta_j,\gamma_j)$.
   }
 }
\end{algorithm2e}\begin{algorithm2e}\small
\SetAlgoLined
  \DontPrintSemicolon
    \KwIn{Stream $\mathcal{S}\triangleq\{(x_0^t,y^t)\}_{t\leq T}$;  Distribution of the delay $p=\{p(j)\}_{j}$; Buffer size $M$.}
 \textbf{Initialize:} Buffers $\{B_j\}_{j}$; params $\{\theta_j,\gamma_j\}_{j}$.\\
\While{\normalfont{\textbf{ training}}}{Sample $j$ in $\{1,...,J\}$ following $p$.\\
  %\algorithmicif{ $ j=1$} \algorithmicthen{ $ (x_{0},y)\gets \mathcal{S}$} \algorithmicelse{ $(x_{j-1},y)\gets B_{j-1}$}\;
   \lIf{$j=1$}{ $ (x_{0},y)\gets \mathcal{S}$}\lElse{ $(x_{j-1},y)\gets B_{j-1}$.}
   $x_j \leftarrow f_{\theta_{j-1}}(x_{j-1})$.\;
    Compute $\nabla_{(\gamma_j,\theta_j)}\hat{ \mathcal{L}}(y,x_j;\gamma_j,\theta_j)$.\;
     $(\theta_j,\gamma_j)\leftarrow$ Update parameters $(\theta_j,\gamma_j)$.\;
    \lIf{$j<J$}{
    $B_{j} \gets (x^{j},y)$.
           }}
    \caption{Asynchronous DGL with Replay\label{algo:buffer_sym}}
    
    \end{algorithm2e}
    
  % \end{minipage} 
   % \end{edit}

    %\hfill
  %\begin{minipage}[b]{0.6\textwidth}

%\end{wrapfigure} 
\fi

Our minimal distributed setting is as follows. Each worker $j$ has a buffer that it writes to and that worker $j+1$ can read from. The buffer uses a simple read/write protocol. A buffer $B_j$ lets layer $j$ write new samples. When it reaches capacity it overwrites the oldest sample. Layer $j+1$ requests samples from the buffer $B_j$. They are selected by a last-in-first-out (LIFO) rule, with precedence for the least reused samples. 
%Worker speed is constant, yet can be potentially different across workers and is also subject to small random fluctuations. 
Alg. \ref{algo:buffer_sym} simulates potential delays in such a setup by the use of a probability mass function (pmf) $p(j)$ over workers, analogous to typical asynchronous settings such as \citep{leblond2017asaga}. At each iteration, a layer is chosen at random according to $p(j)$ to perform a computation. In our experiments we  limit ourselves to pmfs that are uniform over workers except for a single layer which is chosen to be selected less frequently on average. Even in the case of a uniform pmf, asynchronous behavior will naturally arise, requiring the reuse of samples.  Alg.~\ref{algo:buffer_sym} permits a controlled simulation of processing speed discrepancies and will be used over  settings of $p$ and $M$ to demonstrate that training and testing accuracy remain robust in practical regimes. Appendix \ref{appendix:pseudo}  also provides %a more intuitive 
pseudo-code for implementation in a parallel environment.

Unlike common data-parallel asynchronous algorithms \citep{elasticSGD}, the asynchronous DGL does not rely on a master node and requires only local communication similar to recent decentralized schemes \citep{lian2017asynchronous}. Contrary to decentralized SGD, DGL nodes only need to maintain and update the parameters of their local module, permitting much larger modules.  
 %Combining DGL with data-parallel methods is also natural. For example, a 
%This suggests that DGL can be used in combination with data parallelism to add an additional dimension of parallelization. 
Combining asynchronous DGL with distributed synchronous SGD for sub-problem optimization is a promising direction. For example it can alleviate a common issue of the popular distributed synchronous SGD in deep CNNs, which is the often limiting maximum batch size \citep{im1hr}. 
\vspace{-3pt}
\subsection{Auxiliary and Primary Network Design}\label{subsec:aux}

Like DNI our procedure relies on an auxiliary network to obtain update signal, both methods thus require auxiliary network design in addition to the main CNN architecture. \citet{shallow} have shown that spatial averaging operations can be used to construct a scalable auxiliary network for the same objective as used in Sec~\ref{sec:main}. However, they did not directly consider the parallel training use case, where additional care must be taken in the design: The primary consideration is the relative speed of the auxiliary network with respect to its associated main network module. We will use primarily FLOP count in our analysis and aim to restrict our auxiliary networks to be $5\%$ of the main network. 

Although auxiliary network design might seem like an additional layer of complexity in CNN design and may require invoking slightly different architecture principles, this is not inherently prohibitive since architecture design is often related to training (e.g., the use of residuals  is originally motivated by optimization issues inherent to end-to-end backprop \citep{he2016deep}). 

Finally, we note that although we focus on the distributed learning context, this algorithm and associated theory for greedy objectives is generic and has other potential applications. For example greedy objectives have recently been used in \citep{haarnoja18a,huang2017learning} and even with a single worker DGL reduces memory.% consumption. 

\vspace{-2pt}
\section{Theoretical Analysis}
%EB proposal
% We now study the converge results of DGL, we will show that unlike the theoretical results in DNI which make a strong assumption about the gradient approximation, with our refined objective such an assumption is not needed.
% EO proposal: Since we do not rely on any approximated gradients, we can derive stronger properties than \cite{czarnecki2017}, like a rate of convergence in our non-convex setting.
%We now refine the convergence results of \cite{czarnecki2017} by proposing a guaranteed rate of convergence under milder assumptions. For instance, we do not rely on a good approximation of the exact gradient to derive any results, contrary to \cite{czarnecki2017}. 
We now study the converge results of DGL. Since we do not rely on any approximated gradients, we can derive stronger properties than DNI \cite{czarnecki2017}, such as a rate of convergence in our non-convex setting.
To do so, we analyze  Alg. \ref{algo:basic} when the update steps are obtained from stochastic gradient methods. We show convergence guarantees \citep{bottou2018optimization} under reasonable assumptions. In standard stochastic optimization schemes, the input distribution fed to a model is fixed \citep{bottou2018optimization}. In this work, the input distribution to each module is time-varying and dependent on the convergence of the previous module. At time step $t$, for simplicity we will denote all parameters of a module (including auxiliary) as $\Theta_j^t\triangleq(\theta_j^t,\gamma_j^t)$, and samples as $Z_j^t\triangleq (X_j^t,Y^t)$, which follow the density $p_j^t(z)$.  For each auxiliary problem, we aim to prove the strongest existing guarantees \citep{bottou2018optimization,Huo2018} for the  non-convex setting despite time-varying input distributions from prior modules. Proofs %of 
are given in the Appendix.% ~\ref{appendix:pseudo}. 

Let us fix a depth $j$, such that $j>1$ and consider the converged density of the previous layer, $p^*_{j-1}(z)$. We define the following distance: $c^t_{j-1}\triangleq \int |p_{j-1}^t(z)-p_{j-1}^*(z)|\,dz$. Denoting $\ell$ the composition of the non-negative loss function and the network, we will study the expected risk $\mathcal{L}(\Theta_{j})\triangleq \mathbb{E}_{p^*_{j-1}}[\ell(Z_{j-1};\Theta_j)]$. We will now state several standard assumptions we use.
\begin{assumption}[$L$-smoothness] $\mathcal{L}$ is differentiable and its gradient is  $L$-Lipschitz.\end{assumption}
We consider the SGD scheme with learning rate $\{\eta_t\}_t$:
\begin{equation}
\hspace{-0.5em}\Theta^{t+1}_{j}=\Theta^t_j-\eta_t \nabla_{\Theta_j} \ell(Z_{j-1}^t;\Theta_j^t),
\end{equation}\vspace{-3pt}
where \(Z^t_{j-1}\sim p_{j-1}^t\).

 \begin{assumption}[Robbins-Monro conditions] The step sizes satisfy $\sum_t\eta_t=\infty$ yet $\sum_t\eta_t^2<\infty$.\end{assumption}
 
\begin{assumption}[Finite variance] There exists $G>0$ such that $\forall t,\Theta_j, \mathbb{E}_{p^t_{j-1}}\big[\Vert\nabla_{\Theta_j}\ell(Z_{j-1};\Theta_j)\Vert^2\big]\leq G$.\end{assumption}

The Assumptions 1, 2 and 3 are standard \citep{bottou2018optimization, Huo2018}, and we show in the following  that our proof of convergence leads to similar rates, up to a multiplicative constant. The following assumption is specific to our setting where we consider a time-varying distribution:
\begin{assumption}[Convergence of the previous layer] We assume that $\sum_{t} c^t_{j-1}<\infty$.\end{assumption}
\begin{lemma}Under Assumption 3 and 4, for all $\Theta_j,$ one has $ \mathbb{E}_{p^*_{j-1}}\big[\Vert\nabla_{\Theta_j}\ell(Z_{j-1};\Theta_j)\Vert^2\big]\leq G$.\end{lemma}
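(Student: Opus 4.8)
The plan is to fix an arbitrary $\Theta_j$ and transfer the uniform second-moment bound from the time-varying densities $p^t_{j-1}$ to the limiting density $p^*_{j-1}$ through a limiting argument. I would write $g(z)\triangleq\Vert\nabla_{\Theta_j}\ell(z;\Theta_j)\Vert^2$, which for the fixed $\Theta_j$ is a non-negative measurable function of $z$. With this notation Assumption 3 reads exactly $\int g(z)\,p^t_{j-1}(z)\,dz\leq G$ for every $t$, while the goal is the single inequality $\int g(z)\,p^*_{j-1}(z)\,dz\leq G$.

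First I would extract convergence of the densities from Assumption 4. Since $\sum_t c^t_{j-1}<\infty$, the general term of a convergent series tends to zero, so $c^t_{j-1}=\int|p^t_{j-1}(z)-p^*_{j-1}(z)|\,dz\to 0$; that is, $p^t_{j-1}\to p^*_{j-1}$ in $L^1$. (Only $c^t_{j-1}\to 0$ is actually used here, not the full summability.) By the standard fact that $L^1$-convergence implies pointwise almost-everywhere convergence along a subsequence, there is a subsequence $(t_k)_k$ with $p^{t_k}_{j-1}(z)\to p^*_{j-1}(z)$ for almost every $z$.

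Then I would pass to the limit using Fatou's lemma. The functions $z\mapsto g(z)\,p^{t_k}_{j-1}(z)$ are non-negative and converge almost everywhere to $g(z)\,p^*_{j-1}(z)$, so Fatou yields
\begin{equation*}
\int g(z)\,p^*_{j-1}(z)\,dz\leq\liminf_{k}\int g(z)\,p^{t_k}_{j-1}(z)\,dz\leq G,
\end{equation*}
where the final inequality is Assumption 3 applied at each time $t_k$. As $\Theta_j$ was arbitrary, this is precisely $\mathbb{E}_{p^*_{j-1}}[\Vert\nabla_{\Theta_j}\ell(Z_{j-1};\Theta_j)\Vert^2]\leq G$.

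The main obstacle is that $g$ is in general unbounded, so one cannot simply estimate $\left|\int g\,(p^*_{j-1}-p^t_{j-1})\,dz\right|$ by $\Vert g\Vert_\infty\,c^t_{j-1}$ and send $t\to\infty$; it is the non-negativity of $g$ together with Fatou's lemma that lets the uniform-in-$t$ bound survive the passage to the limiting distribution. The only other point requiring care is the subsequence extraction, which is why the estimate is naturally phrased with a $\liminf$ rather than a genuine limit — harmless here, since the bound $G$ holds for every index in the subsequence.
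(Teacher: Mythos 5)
Your proof is correct and follows essentially the same route as the paper: both arguments hinge on applying Fatou's lemma to the non-negative integrands $z\mapsto\Vert\nabla_{\Theta_j}\ell(z;\Theta_j)\Vert^2\,p^t_{j-1}(z)$ after establishing almost-everywhere convergence of the densities. The only (harmless) difference is how that a.e. convergence is obtained — you extract a subsequence from the $L^1$ convergence $c^t_{j-1}\to 0$, whereas the paper applies Tonelli to the summable series $\sum_t|p^t_{j-1}-p^*_{j-1}|$ to get a.e. convergence of the full sequence; your observation that only $c^t_{j-1}\to 0$ is actually needed is accurate.
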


%\mynotes{In the case of neural network training, experimental evidences validate this assumption~\citep{43442}.}

%Combined with the Assumption 1, it implies that $\mathbb{E}_{p^*_j}\big[\Vert\nabla \ell(X_j,Y;\Theta)\Vert^2\big]\leq G$.

We are now ready to prove the core statement for the convergence results in this setting:
\begin{lemma}\label{lemma:main}Under Assumptions 1, 3 and 4, we have:
\begin{eqnarray*}
\mathbb{E}[\mathcal{L}(\Theta_j^{t+1})] & \leq & \mathbb{E}[\mathcal{L}(\Theta_j^{t})]+\frac{LG}{2}\eta_t^2\,\\
{} & {} & -\eta_t\big(\mathbb{E}[\Vert\nabla\mathcal{L}(\Theta_j^t)\Vert^2] -\sqrt{2}Gc^t_{j-1}\big).
\end{eqnarray*}\end{lemma}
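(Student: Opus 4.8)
The plan is to run the standard nonconvex SGD ``descent lemma'' argument, but to track carefully the discrepancy between the density $p^t_{j-1}$ from which the stochastic gradient is actually drawn and the limiting density $p^*_{j-1}$ against which $\mathcal{L}$ is defined. First I would invoke $L$-smoothness (Assumption 1) to obtain the quadratic upper bound
\[ \mathcal{L}(\Theta_j^{t+1}) \le \mathcal{L}(\Theta_j^t) + \langle \nabla\mathcal{L}(\Theta_j^t),\, \Theta_j^{t+1}-\Theta_j^t\rangle + \tfrac{L}{2}\|\Theta_j^{t+1}-\Theta_j^t\|^2, \]
and substitute the SGD step $\Theta_j^{t+1}-\Theta_j^t = -\eta_t\,\nabla_{\Theta_j}\ell(Z_{j-1}^t;\Theta_j^t)$ with $Z_{j-1}^t\sim p^t_{j-1}$.

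Next I would take the expectation conditioned on $\Theta_j^t$. The quadratic term is immediate: $\mathbb{E}_{p^t_{j-1}}[\|\nabla_{\Theta_j}\ell\|^2]\le G$ by Assumption 3, which produces the $\frac{LG}{2}\eta_t^2$ summand. The delicate contribution is the linear term $-\eta_t\,\langle \nabla\mathcal{L}(\Theta_j^t),\, \mathbb{E}_{p^t_{j-1}}[\nabla_{\Theta_j}\ell(Z_{j-1}^t;\Theta_j^t)]\rangle$. Were the sampling density equal to $p^*_{j-1}$, the conditional mean of the stochastic gradient would coincide with $\nabla\mathcal{L}(\Theta_j^t)$ (differentiating under the integral, a standard regularity condition), giving the desired $-\eta_t\|\nabla\mathcal{L}(\Theta_j^t)\|^2$; the entire difficulty is the mismatch $p^t_{j-1}\ne p^*_{j-1}$. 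I would therefore split the conditional mean as $\nabla\mathcal{L}(\Theta_j^t) + b^t$, where the bias is $b^t \triangleq \int \nabla_{\Theta_j}\ell(z;\Theta_j^t)\,(p^t_{j-1}(z)-p^*_{j-1}(z))\,dz$, so that the linear term becomes $-\eta_t\|\nabla\mathcal{L}(\Theta_j^t)\|^2 - \eta_t\langle\nabla\mathcal{L}(\Theta_j^t),b^t\rangle$.

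The core estimate, and the step I expect to be the main obstacle, is to bound $|\langle\nabla\mathcal{L}(\Theta_j^t),b^t\rangle|$ using only the $L^1$ discrepancy $c^t_{j-1}$ and the variance constant $G$, since we have no pointwise control on $\nabla_{\Theta_j}\ell$, only its second moments. By Cauchy--Schwarz, $|\langle\nabla\mathcal{L}(\Theta_j^t),b^t\rangle| \le \|\nabla\mathcal{L}(\Theta_j^t)\|\,\|b^t\|$. I would control $\|\nabla\mathcal{L}(\Theta_j^t)\|$ by $\sqrt{G}$ using Jensen together with the second-moment bound under $p^*_{j-1}$, and control $\|b^t\|$ by a second application of Cauchy--Schwarz against the finite measure $|p^t_{j-1}-p^*_{j-1}|\,dz$, whose total mass is exactly $c^t_{j-1}$ and under which the second moment of $\nabla_{\Theta_j}\ell$ is at most $G$ with respect to both $p^t_{j-1}$ (Assumption 3) and $p^*_{j-1}$. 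This is precisely where Lemma 1 is indispensable: Assumption 3 only controls moments under the time-varying density, and Lemma 1 supplies the matching control under the limiting density $p^*_{j-1}$, the sum of the two second-moment bounds producing the factor $\sqrt{2}$. Combining the factors yields a bias bound of the stated form, $\sqrt{2}\,G\,c^t_{j-1}$.

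Finally I would take the total expectation over $\Theta_j^t$ via the tower property, noting that $c^t_{j-1}$ is deterministic and hence passes through unchanged, and then rearrange to recover exactly the claimed inequality. Every step except the bias bound is the textbook smoothness-plus-bounded-variance computation; the genuinely new ingredient is controlling the gradient bias induced by the non-stationary input distribution through $c^t_{j-1}$, which is what Lemma 1 (and, through it, Assumption 4) is set up to enable.
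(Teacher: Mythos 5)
Your proposal follows essentially the same route as the paper's proof: the descent lemma from $L$-smoothness, substitution of the SGD step, a conditional expectation under $p^t_{j-1}$, a two-fold Cauchy--Schwarz bound on the gradient bias against the measure $|p^t_{j-1}(z)-p^*_{j-1}(z)|\,dz$ (with Lemma 1 supplying the second-moment control under $p^*_{j-1}$ that produces the factor $2$), Jensen to bound $\Vert\nabla\mathcal{L}(\Theta_j^t)\Vert$ by $\sqrt{G}$, and the tower property. The only caveat is that the combination $\sqrt{G}\cdot\sqrt{2Gc^t_{j-1}}$ actually yields $\sqrt{2}\,G\sqrt{c^t_{j-1}}$ rather than $\sqrt{2}\,G\,c^t_{j-1}$ as you assert at the end --- but this matches the paper's own derivation and the $\sqrt{c^t_{j-1}}$ appearing in the subsequent Proposition and Corollary, so the discrepancy is a typo in the lemma statement rather than a flaw in your argument.
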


%Observe the 
The
expectation is taken over each random variable. Also, note that without the temporal dependency (i.e. $c_j^t=0$), this becomes analogous to Lemma 4.4 in \citep{bottou2018optimization}.  Naturally it follows, that
%\begin{assumption}
%$\sum_t\eta_tc_t<\infty$
\begin{prop}
Under Assumptions 1, 2, 3 and 4, each term of the following equation converges:
\vspace{-5pt}
\begin{align}
\sum_{t=0}^T \eta_t
\mathbb{E}[\Vert\nabla\mathcal{L}(\Theta_j^t)\Vert^2] &\leq  \mathbb{E}[\mathcal{L}(\Theta^0_j)]\nonumber\\
&+G\sum_{t=0}^T\eta_t\left(\sqrt{2c_{j-1}^t}+\frac{L\eta_t}{2}\right)\nonumber.
\end{align}%}%
\end{prop}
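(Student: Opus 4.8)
The plan is to obtain the Proposition directly from the one-step descent estimate of Lemma~\ref{lemma:main} by telescoping over $t$, mirroring the classical non-convex SGD argument of \citet{bottou2018optimization} (the summation that follows their Lemma~4.4), with the extra drift term $c^t_{j-1}$ simply carried along.

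First I would rearrange Lemma~\ref{lemma:main} so that the gradient term stands alone on the left:
\[
\eta_t\,\mathbb{E}[\Vert\nabla\mathcal{L}(\Theta_j^t)\Vert^2]\le \mathbb{E}[\mathcal{L}(\Theta_j^t)]-\mathbb{E}[\mathcal{L}(\Theta_j^{t+1})]+G\,\eta_t\Big(\sqrt{2c^t_{j-1}}+\tfrac{L\eta_t}{2}\Big).
\]
Summing this over $t=0,\dots,T$, the loss increments collapse by telescoping to $\mathbb{E}[\mathcal{L}(\Theta_j^0)]-\mathbb{E}[\mathcal{L}(\Theta_j^{T+1})]$. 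Since $\ell$ is non-negative we have $\mathcal{L}\ge 0$, so the trailing term $-\mathbb{E}[\mathcal{L}(\Theta_j^{T+1})]$ is non-positive and may be dropped while preserving the upper bound. This reproduces exactly the inequality in the statement.

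It then remains to justify the assertion that ``each term converges'', i.e.\ that both sides possess finite limits as $T\to\infty$. On the right, $\sum_t\eta_t^2<\infty$ by Assumption~2. For the drift contribution I would apply Cauchy--Schwarz,
\[
\sum_{t}\eta_t\sqrt{c^t_{j-1}}\le\Big(\sum_t\eta_t^2\Big)^{1/2}\Big(\sum_t c^t_{j-1}\Big)^{1/2},
\]
which is finite because $\sum_t\eta_t^2<\infty$ (Assumption~2) and $\sum_t c^t_{j-1}<\infty$ (Assumption~4). Hence the right-hand side converges, and since the left-hand side is a series of non-negative terms dominated by it, the left-hand side converges as well by monotone convergence.

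I do not anticipate a genuine obstacle here: all of the analytic difficulty is absorbed into Lemma~\ref{lemma:main}, and what remains is bookkeeping. The one step deserving care is the passage from the summability of $c^t_{j-1}$ (Assumption~4) to that of the mixed term $\sum_t\eta_t\sqrt{c^t_{j-1}}$; it is precisely the Cauchy--Schwarz coupling with $\sum_t\eta_t^2<\infty$ that renders the square-root drift summable, so the square-summability half of Assumption~2 together with Assumption~4 is what is genuinely used, whereas the Robbins--Monro divergence half $\sum_t\eta_t=\infty$ plays no role at this stage (it is instead what would subsequently force $\liminf_t\mathbb{E}[\Vert\nabla\mathcal{L}(\Theta_j^t)\Vert^2]=0$).
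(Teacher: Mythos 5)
Your proof is correct and follows essentially the same route as the paper: rearrange Lemma~\ref{lemma:main}, telescope over $t$, drop the non-positive terminal term $-\mathbb{E}[\mathcal{L}(\Theta_j^{T+1})]$ using non-negativity of the loss, and deduce convergence of the right-hand side from $\sum_t \eta_t^2<\infty$ and $\sum_t c^t_{j-1}<\infty$. The only difference is that you make explicit, via Cauchy--Schwarz, the step where the paper merely asserts that $\sum_t \eta_t\sqrt{c^t_{j-1}}$ converges because $\sum_t c^t_{j-1}$ and $\sum_t\eta_t^2$ do, which is a welcome clarification rather than a deviation.
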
Thus the DGL scheme converges in the sense of \citep{bottou2018optimization,Huo2018}.
%\mynotes{Observe that this bound does not indicate a lower convergence rate than \citep{hou,bottou}. }
 We can also  obtain the following rate:
\begin{corollary} The sequence of expected gradient norm accumulates around 0 at the following rate:
\begin{equation}
\inf_{t\leq T}\mathbb{E}[\Vert \nabla\mathcal{L}(\Theta_j^t)\Vert^2]\leq\mathcal{O}\left(\frac{\sum_{t=0}^T\sqrt{c_{j-1}^t}\eta_t}{\sum_{t=0}^T\eta_t}\right)\,.\end{equation}\end{corollary}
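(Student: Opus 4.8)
The plan is to obtain the rate as an essentially immediate consequence of the Proposition, by converting its weighted sum of squared gradient norms into a bound on the running infimum and then checking that the ordinary SGD terms are dominated by the distribution-shift term. First I would record the elementary fact that, because the step sizes are nonnegative, the $\eta_t$-weighted average upper bounds the minimum:
\[
\Big(\sum_{t=0}^T\eta_t\Big)\,\inf_{t\leq T}\mathbb{E}[\Vert\nabla\mathcal{L}(\Theta_j^t)\Vert^2]\;\leq\;\sum_{t=0}^T\eta_t\,\mathbb{E}[\Vert\nabla\mathcal{L}(\Theta_j^t)\Vert^2].
\]
Feeding in the Proposition and dividing through by $\sum_{t=0}^T\eta_t>0$ (which is positive and, by Assumption 2, diverges) then yields
\[
\inf_{t\leq T}\mathbb{E}[\Vert\nabla\mathcal{L}(\Theta_j^t)\Vert^2]\;\leq\;\frac{\mathbb{E}[\mathcal{L}(\Theta_j^0)]+\tfrac{LG}{2}\sum_{t=0}^T\eta_t^2}{\sum_{t=0}^T\eta_t}\;+\;\frac{\sqrt{2}\,G\sum_{t=0}^T\eta_t\sqrt{c_{j-1}^t}}{\sum_{t=0}^T\eta_t}.
\]

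The second summand is, up to the constant $\sqrt{2}\,G$, exactly the quantity appearing in the corollary, so the only remaining task is to show the first summand is of the same order. For this I would use the Robbins--Monro condition $\sum_t\eta_t^2<\infty$: the numerator $\mathbb{E}[\mathcal{L}(\Theta_j^0)]+\tfrac{LG}{2}\sum_t\eta_t^2$ is then bounded by a finite constant $C$ independent of $T$. It therefore suffices to note that the partial sums $\sum_{t=0}^T\eta_t\sqrt{c_{j-1}^t}$ are bounded away from $0$; indeed, provided the input distribution to module $j$ is not already stationary there is an index $t_0$ with $c_{j-1}^{t_0}>0$, so $\sum_{t=0}^T\eta_t\sqrt{c_{j-1}^t}\geq\eta_{t_0}\sqrt{c_{j-1}^{t_0}}>0$ for all $T\geq t_0$. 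Hence $C=\mathcal{O}\big(\sum_{t=0}^T\eta_t\sqrt{c_{j-1}^t}\big)$, and after dividing by $\sum_{t=0}^T\eta_t$ both summands share the common order $\mathcal{O}\big(\sum_t\eta_t\sqrt{c_{j-1}^t}/\sum_t\eta_t\big)$, giving the stated bound.

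I expect the genuine subtlety to lie in this final absorption rather than in the algebra: the big-$\mathcal{O}$ quietly presumes that the accumulated distribution shift $\sum_t\eta_t\sqrt{c_{j-1}^t}$ does not itself vanish, so that the initialization and variance contributions can be hidden inside it. This is a mild and natural condition in the present setting, but it is the step that must be stated carefully; note that it covers both regimes at once, whether $\sum_t\eta_t\sqrt{c_{j-1}^t}$ converges (in which case the bound behaves like $1/\sum_t\eta_t\to 0$) or diverges (in which case the shift term is genuinely the dominant one).
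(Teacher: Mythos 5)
Your proof is correct and follows exactly the intended derivation: the paper states this corollary without an explicit proof, and the natural route is the one you take---bounding the infimum by the $\eta_t$-weighted average, invoking the Proposition, and dividing by $\sum_{t=0}^T\eta_t$. Your remark about absorbing the constant numerator $\mathbb{E}[\mathcal{L}(\Theta_j^0)]+\tfrac{LG}{2}\sum_t\eta_t^2$ into the $\mathcal{O}(\cdot)$ is a fair point of rigor that the paper glosses over: strictly, the bound should carry an additional additive $\mathcal{O}\bigl(1/\sum_{t=0}^T\eta_t\bigr)$ term (which vanishes by Assumption 2) unless one assumes, as you do, that $\sum_t\eta_t\sqrt{c_{j-1}^t}$ stays bounded away from zero.
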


Thus compared to the sequential case, the parallel setting adds a delay that is controlled by  $\sqrt{c_{j-1}^t}$.

%\mynotes{Show lemma/corollary from bottou1996 or yu et al}

%\mynotes{Write limit inferior stuff here then:}
%Implying that each layer will require the same number of iteration to reach a critic point.
%\mynotes{right a few corrolarys}
%\mynotes{1. a remark/corollary that extends this to J modules 2. corollary 2 from Hou}
%hus we have shown that the DGL optimization scheme converges at the same rate for all the modules as the sequential greedy optimization scheme, while DGL permits the parallel training of all the modules.

%\begin{prop} Consider a two layer neural network with parameters $\theta$ and $\alpha$ for the first and second layer respectively.  
%\end{prop}
\begin{figure*}[t]
    \centering
    \includegraphics[scale=0.16]{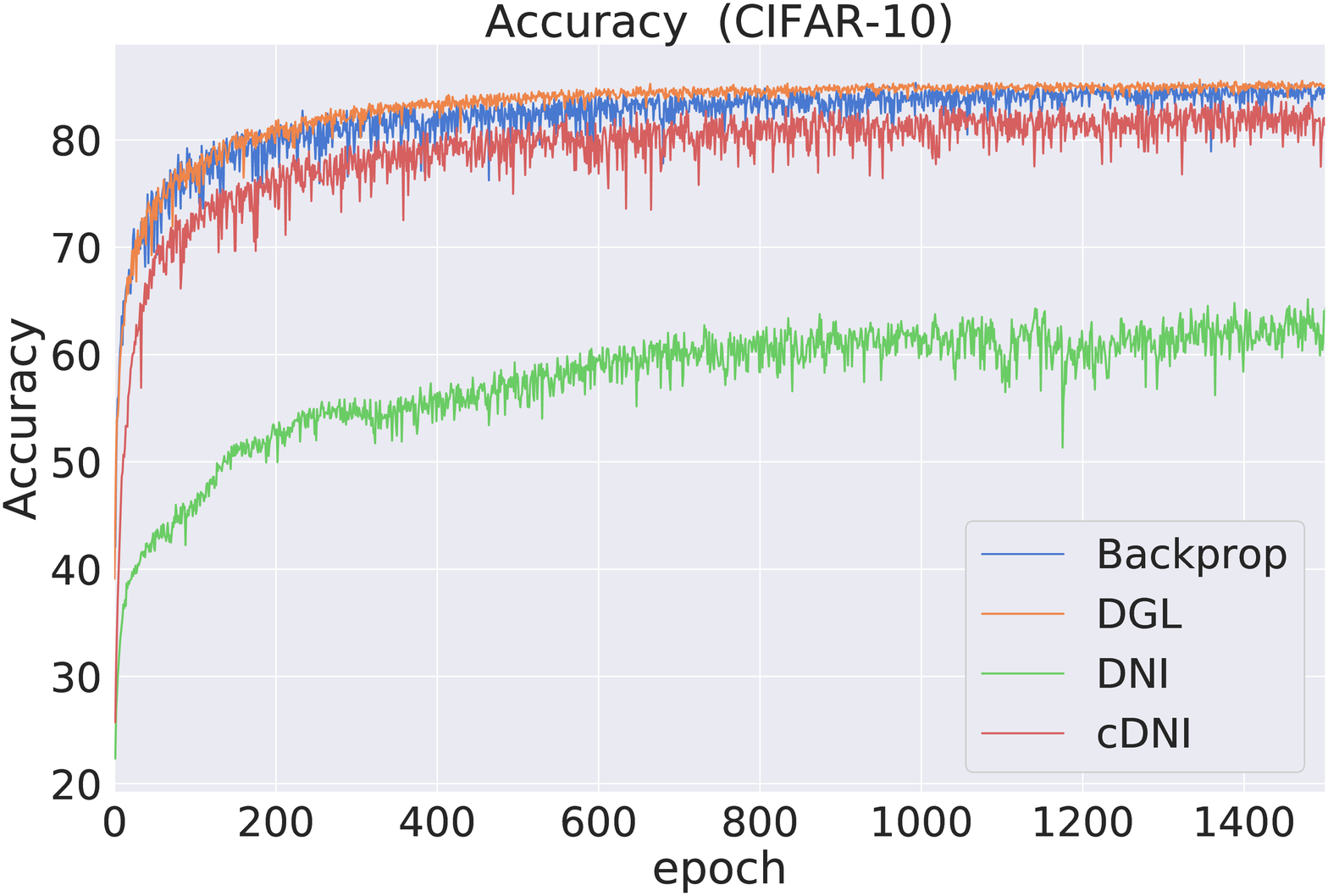}\includegraphics[scale=0.16]{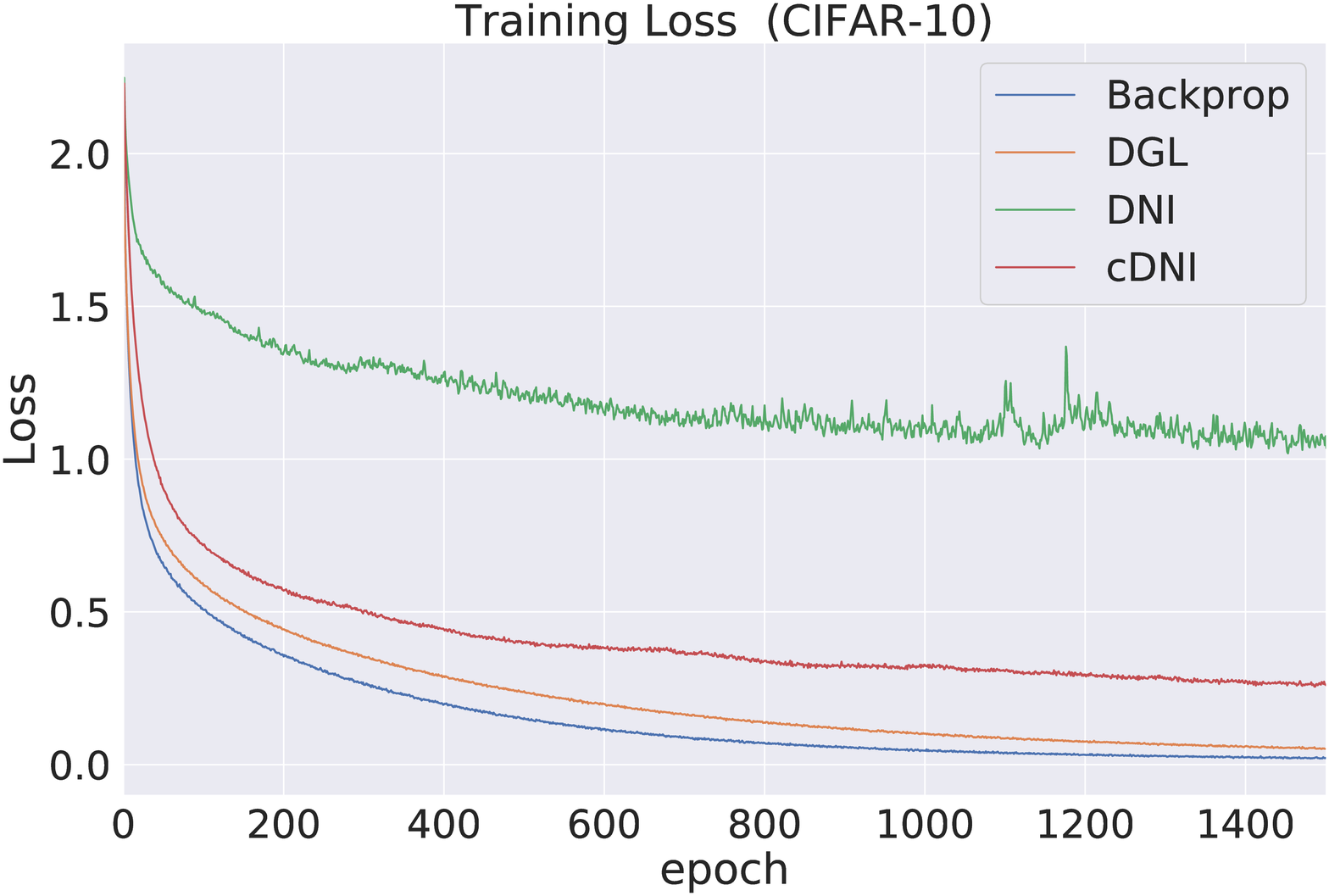}\vspace{-14pt}
    \caption{Comparison of DNI, cDNI, and DGL in terms of training loss and test accuracy for experiment from \citep{jaderberg2016decoupled}. DGL converges better than cDNI and DNI with the same auxiliary net. and generalizes better than backprop.}
    \label{fig:dni_comp}\vspace{-14pt}
\end{figure*}
\vspace{-2pt}
\section{Experiments}
We conduct experiments that empirically show that DGL optimizes the greedy objective well, showing it is favorable against recent state-of-the-art proposals for decoupling training of deep network modules. We show that unlike previous decoupled proposals it can still work on a large-scale dataset (ImageNet) and that it can, in some cases, generalize better than standard back-propagation. We then extensively evaluate the asynchronous DGL, simulating large delays. For all experiments we use architectures taken from prior works and standard optimization settings. 
\vspace{-3pt}
\subsection{Other Approaches and Auxiliary Network Designs}\label{sec:comparisons}
This section presents experiments evaluating DGL with the CIFAR-10 dataset~\citep{krizhevsky2009learning} and standard data augmentation. We  first use a  setup that permits us to compare against the DNI method and which also highlights the generality and scalability of DGL.  We then consider the design of a more efficient auxiliary network which will help to scale to the ImageNet dataset. We will also show that DGL is  effective at optimizing the greedy objective compared to a naive sequential algorithm.
\paragraph{Comparison to DNI}
We reproduce the CIFAR-10 CNN experiment described in \citep{jaderberg2016decoupled}, Appendix C.1. This experiment utilizes a 3 layer network with auxiliary networks of 2 hidden CNN layers.  We compare our reproduction to the DGL approach. Instead of the final synthetic gradient prediction for the DGL we apply a final projection to the target prediction space. Here, we follow the prescribed optimization procedure from \citep{jaderberg2016decoupled}, using Adam with a learning rate of $3\times 10^{-5}$. We run training for 1500 epochs and compare standard backprop, DNI, context DNI (cDNI) \citep{jaderberg2016decoupled} and DGL.  Results are shown in Fig. \ref{fig:dni_comp}. Details are included in the Appendix. The DGL method outperforms DNI and the cDNI by a substantial amount both in test accuracy and training loss. Also in this setting, DGL can generalize better than standard backprop and obtains a close final training loss.

We also attempted DNI with the more commonly used optimization settings for CNNs (SGD with momentum and step decay), but found that DNI would diverge when larger learning rates were used, although DGL sub-problem optimization worked effectively with common CNN optimization strategies. We also note that the prescribed experiment uses a setting where the scalability of our method is not fully exploited. Each layer of the primary network of \citep{jaderberg2016decoupled} has a pooling operation, which permits the auxiliary network to be small for synthetic gradient prediction. This however severely restricts the architecture choices in the primary network to using a pooling operation at each layer. In DGL, we can apply the pooling operations in the auxiliary network, thus permitting the auxiliary network to be negligible in cost even for layers without pooling (whereas synthetic gradient predictions  often have to be as costly as the base network). Overall, DGL is more scalable, accurate and  robust to changes in optimization hyper-parameters than DNI.

\paragraph{Auxiliary Network Design} We consider different auxiliary networks for CNNs. As a baseline we use convolutional auxiliary layers as in \citep{jaderberg2016decoupled} and \citep{shallow}. For distributed training application this approach is sub-optimal as the auxiliary network can be substantial compared to the base network, leading to poorer parallelization gains. We note however that even in those cases (that we don't study here) where the auxiliary network computation is potentially on the order of the primary network, it can still give advantages for parallelization for very deep networks and many available workers. 

The primary network architecture we use for this study is a simple CNN similar to VGG family models \citep{simonyan2014very} and those used in \citep{shallow}. It consists of 6 convolutions of size $3\times 3$, batchnorm and shape preserving padding, with $2\times2$ maxpooling at layers 1 and 3. The  width of the first layer is 128 and is doubled at each downsampling operation. The final layer does not have an auxiliary model-- it is followed by a pooling and 2-hidden layer fully connected network, for all experiments.  Two alternatives to the CNN auxiliary of \citep{shallow} are explored (Tab. \ref{tab:flop}).% below. Results are given in Tab. \ref{tab:flop}.%\begin{wraptable}{R}{0.45\textwidth}
    \begin{table}
    \centering
    \begin{tabular}{|c|c|c|}
    \hline&Relative FLOPS&Acc.\\\hline
     CNN-aux&   $200\%$ &92.2\\\hline
    MLP-aux&     $0.7\%$ &90.6\\\hline
     MLP-SR-aux& $4.0\%$  &91.2\\\hline

    \end{tabular}
    \caption{Comparison of auxiliary networks on CIFAR. CNN-aux applied in previous work is inefficient w.r.t. the primary module. We report flop count of the aux net relative to the largest module. MLP-aux and MLP-SR-aux applied after spatial averaging operations are far more effective with min. acc. loss. }
    \label{tab:flop}
    \end{table}
%\end{wraptable}

The baseline auxiliary strategy based on  \citep{shallow} and \citep{jaderberg2016decoupled} applies 2 CNN layers followed by a $2\times 2$ averaging and projection, denoted as \textit{CNN-aux}. 
First, we explore  a direct application of the spatial averaging to $2\times2$ output shape (regardless of the  resolution) followed by a 3-layer MLP (of constant width). This is denoted \textit{MLP-aux} and drastically reduces the FLOP count with minimal accuracy loss compared to \textit{CNN-aux}. Finally, we study a staged spatial resolution, first reducing the spatial resolution by 4$\times$ (and total size 16$\times$), then applying 3 $1\times 1$ convolutions followed by a reduction to $2\times 2$ and a 3 layer MLP, that we denote as \textit{MLP-SR-aux}. These latter two strategies that leverage the spatial averaging produce auxiliary networks that are less than $5\%$ of the FLOP count of the primary network even for large spatial resolutions as in real world image datasets. We will show that MLP-SR-aux is still effective even for the large-scale ImageNet dataset. We note that these more effective auxiliary models are not easily applicable in the case of DNI's gradient prediction. \paragraph{ Sequential vs. Parallel Optimization of Greedy Objective}
We briefly compare the sequential optimization of the greedy objective \citep{shallow,bengio2007greedy} to the DGL (Alg.  \ref{algo:basic}). We use a 6 layer CIFAR-10 network with an MLP-SR-aux auxiliary model. In parallel we train the layers together for 50 epochs and in the sequential training we train each layer for 50 epochs before moving to the subsequent one. Thus the difference to DGL lies only in the input received at each layer (fully converged previous layer versus not fully converged previous layer). The rest of the optimization settings are identical. Fig.~\ref{fig:greedy_v_parallel} shows comparisons of the learning curves for sequential training and DGL at layer 4 (layer 1 is the same for both as the input representation is not varying over the training period).  DGL quickly catches up with the sequential training scheme and appears to sometimes generalize better. Like \citet{oyallon2017building}, we also visualize the dynamics of training per layer in Fig.  \ref{fig:dynamics}, which demonstrates that after just a few epochs the individual layers build a dynamic of progressive improvement with depth. %Additional visualizations are included in the Appendix. 
\begin{figure}[t]
%\begin{minipage}[t]{.44\textwidth}
\center
\includegraphics[scale=0.17]{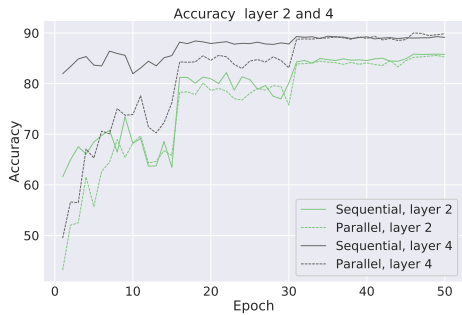}\vspace{-9pt}
    \caption{Comparison of sequential and parallel training. Parallel  catches up rapidly to sequential.}\label{fig:greedy_v_parallel}
%\end{minipage} 
%\quad
\end{figure}
%\begin{minipage}[t]{.53\textwidth}
\begin{figure}
\center
    \includegraphics[scale=0.17]{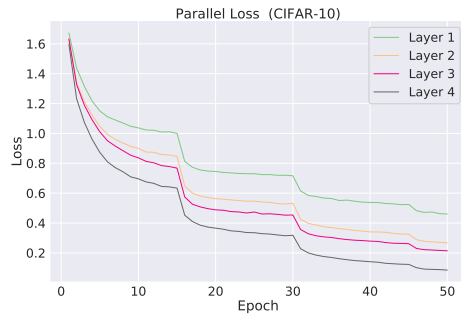}
    \vspace{-13pt}
    \caption{Per layer-loss on CIFAR: after few epochs,  the layers build a dynamic of progressive improvement in depth.}\vspace{2pt}
    \label{fig:dynamics}
%\end{minipage}

\end{figure}

\paragraph{Multi-Layer modules}
We have so far mainly considered the setting of layer-wise decoupling. This approach however can easily be applied to generic modules. Indeed, approaches such as  DNI \citep{jaderberg2016decoupled} often consider decoupling entire multi-layer modules. Furthermore the propositions for backward unlocking \citep{ddg,Huo2018} also rely on and report they can often only decouple 100 layer networks into 2 or 4 blocks before observing optimization issues or performance losses and require that the number of parallel modules be much lower than the network depth for the theoretical guarantees to hold. As in those cases, using  multi-layer decoupled modules can improve performance and is natural in the case of deeper networks.  We now use such a multi-layer approach to directly compare to the backward unlocking of \citep{ddg} and then subsequently we will apply this on deep networks for ImageNet. From here on we will denote $K$ the number of total modules a network is split into.
%In Appendix\ref{appendix:extra} we do additional comparisons for CIFAR-10 examples with multi-layer modules.
%DGL however does not suffer from optimization or convergence issues in the case where the number of modules, $K$, is large relative to the number of layers. %
%\vspace*{-10pt}

%\begin{wraptable}{R}{0.4\textwidth}%\vspace*{-13pt}
 %   \begin{tabular}{|c|c|c|}
  %      \hline
   %        Backprop &  DDG & DGL  \\\hline
    %       93.53 &  93.41 & $93.5\pm0.1$ \\\hline
    %\end{tabular}
    %\caption{ResNet-110 ($K=2$) for Backprop and DDG method  from \citep{ddg}. DGL is run for 3 trials to compute variance. They give the same accuracy with DGL being update unlocked, DDG only backward unlocked. DNI is reported to not work in this setting \citep{ddg}.}
    %\label{tab:ddg_comp}
%\end{wraptable}

\paragraph{Comparison to DDG} \citet{ddg} propose a solution to the backward locking (less efficient than solving update-locking, see discussion in Sec~\ref{sec:rel}). 
\begin{table}
    \center\begin{tabular}{|c|c|c|}
        \hline
           Backprop &  DDG & DGL  \\\hline
           93.53 &  93.41 & $93.5\pm0.1$ \\\hline
    \end{tabular}
    \caption{ResNet-110($K=2$) for Backprop and DDG method  from \citep{ddg}. DGL is run for 3 trials to compute variance. They give the same acc. with DGL being update unlocked, DDG only backward unlocked. DNI is reported to not work in this setting \citep{ddg}.}\vspace{2pt}
    \label{tab:ddg_comp}
\end{table}
We show that even in this situation the DGL method can provide a strong  baseline for work on backward unlocking. We take the experimental setup from \citep{ddg}, which considers a ResNet-110 parallelized into $K=2$ blocks. 
We use the auxiliary network MLP-SR-aux which has less than $0.1\%$ the FLOP count of the primary network. We use the exact optimization and network split points as in \citep{ddg}. 
 
To assess variance in  CIFAR-10 accuracy, we perform 3 trials. Tab. \ref{tab:ddg_comp} shows that the accuracy is the same across the DDG method, backprop, and our approach. DGL achieves better parallelization because it is update unlocked. We use the parallel implementation provided by \citep{ddg} to obtain a direct wall clock time comparison. We note that there are multiple considerations for comparing speed across these methods (see Appendix ~\ref{appendix:speed}). 

\paragraph{Wall Time Comparison} We compare to the parallel implementation of \citep{ddg} using the same communication protocols and run on the same hardware. We find for $K=2,4$ GPU gives a $~5\%, 18\%$ respectively speedup over DDG. With DDG $K=4$ giving approximately $2.3\times$ speedup over standard backprop on same hardware (close to results from \citep{ddg}).

\subsection{Large-scale Experiments}\label{sec:imagenet}

\begin{figure*}[t]
\begin{minipage}{\textwidth}
\begin{minipage}[b]{0.49\textwidth}
\centering
\includegraphics[scale=0.18]{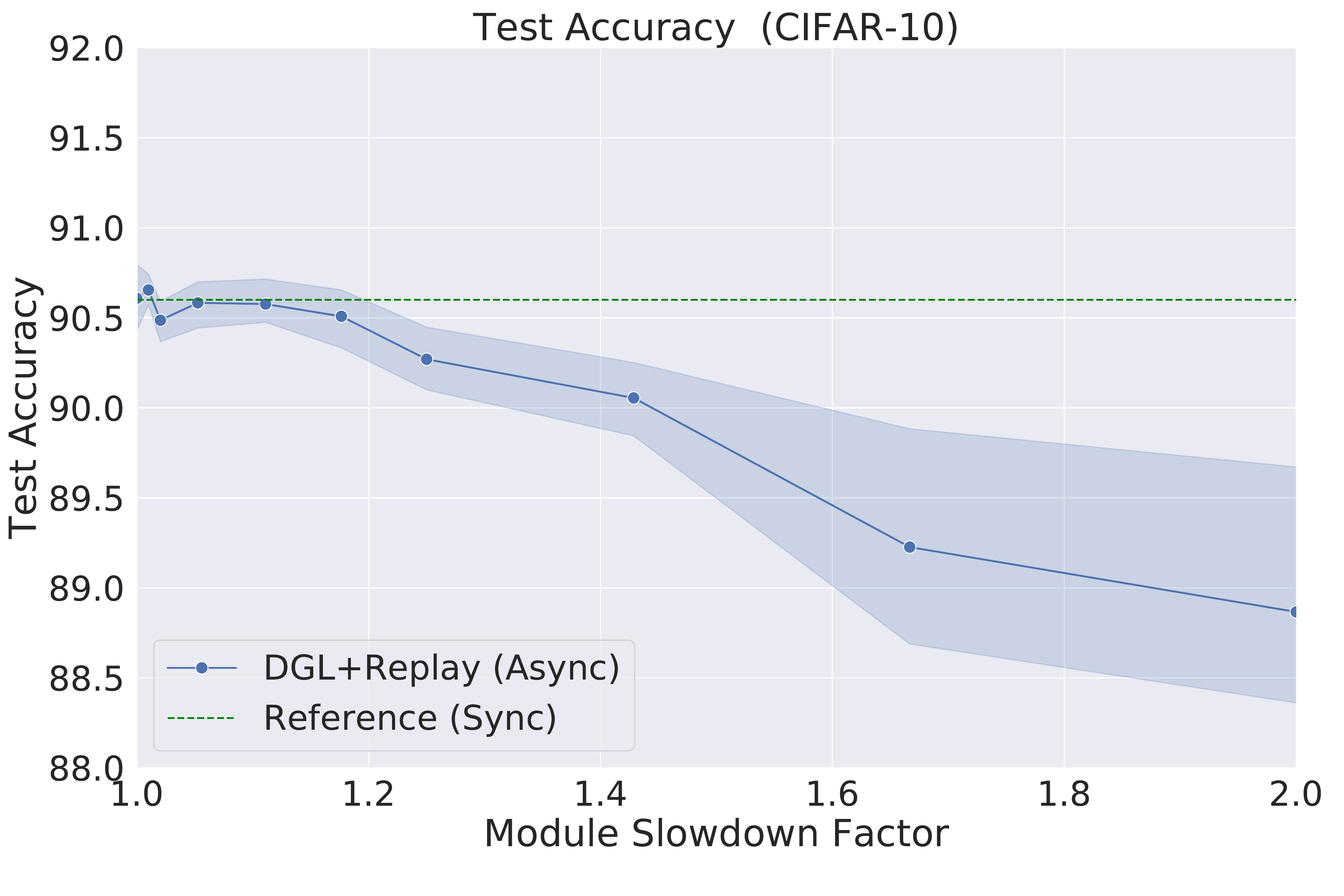}\vspace{-15pt}
\caption{Evaluation of Async DGL. A single layer is slowed down on average over others, with negligible losses of accuracy at even substantial delays.\label{fig:buffer}}
\end{minipage}
\hfill
\begin{minipage}[b]{0.48\textwidth}
\resizebox{.95\textwidth}{!}{\begin{tabular}{c|c|c}
Model (training method) & Top-1 & Top-5 \\\hline
VGG-13 (DGL per Layer, $K=10$) & 64.4 & 85.8 \\\hline
VGG-13 (DGL $K=4$) & \textbf{67.8}  & \textbf{88.0}\\\hline
VGG-13 (backprop) & 66.6 & 87.5 \\\hline\hline
VGG-19 (DGL $K=4$) & 69.2  & 89.0 \\\hline
VGG-19 (DGL $K=2$) & \textbf{70.8}  & \textbf{90.2} \\\hline
VGG-19 (backprop) & 69.7 & 89.7 \\\hline\hline
ResNet-152 (DGL $K=2$) & \textbf{74.5} &  92.0 \\\hline
ResNet-152 (backprop) & 74.4 &\textbf{92.1}\\\hline
\end{tabular}\vspace{-15pt}
}
\captionsetup{type=table} %% tell latex to change to table
  \caption{ImageNet results using training schedule of \citep{Xiao2019} for DGL and standard e2e backprop. DGL with VGG and ResNet obtains similar or better accuracies, while enabling parallelization and reduced memory. \label{tab:imagenet_results}\vspace{-15pt}}
\end{minipage}
\end{minipage}
\end{figure*}
Existing methods considering update or backward locking have not been evaluated on large image datasets as they are often unstable or already show large losses in accuracy on smaller datasets. Here we study the optimization of several well-known architectures, mainly the VGG family \citep{simonyan2014very} and the ResNet \citep{he2016deep}, with DG on the ImageNet dataset. 
In all our experiments we use the MLP-SR-aux auxiliary net which scales well from the smaller CIFAR-10  to the larger ImageNet. The final module has no auxiliary network. For all optimization of auxiliary problems and for end-to-end optimization of reference models we use the shortened optimization schedule prescribed in \citep{Xiao2019}.  Results are shown in Tab.~\ref{tab:imagenet_results}. We see that for all the models DGL can perform as well and sometimes better than the end-to-end trained models, while permitting parallel training. In all these cases the auxiliary networks are neglibile (see Appendix Table ~\ref{tab:flop_im} for more details). For the VGG-13 architecture  we also evaluate the case where the model is trained layer by layer ($K=10$). Although here performance is slightly degraded, we find it is suprisingly high given that no backward communication is performed. We conjecture that improved auxiliary models and combinations with methods such as \citep{Huo2018} to allow feedback on top of the local model, may further improve performance. 
Also for the settings with larger potential parallelization, slower but more performant auxiliary models could potentially be considered as well.

The synchronous DGL has also favorable memory usage compared to DDG and to the DNI method, DNI requiring to store larger activations and DDG having high memory compared to the base network even for few splits \citep{Huo2018}. Although not our focus, the single worker version of DGL has favorable memory usage compared to standard backprop training. For example, the ResNet-152 DGL $K=2$ setting  can fit $38\%$ more samples on a single 16GB GPU than the  standard end-to-end training.

\subsection{Asynchronous DGL with Replay}

\begin{figure}
\begin{center}\vspace{-3pt}
\includegraphics[scale=0.24]{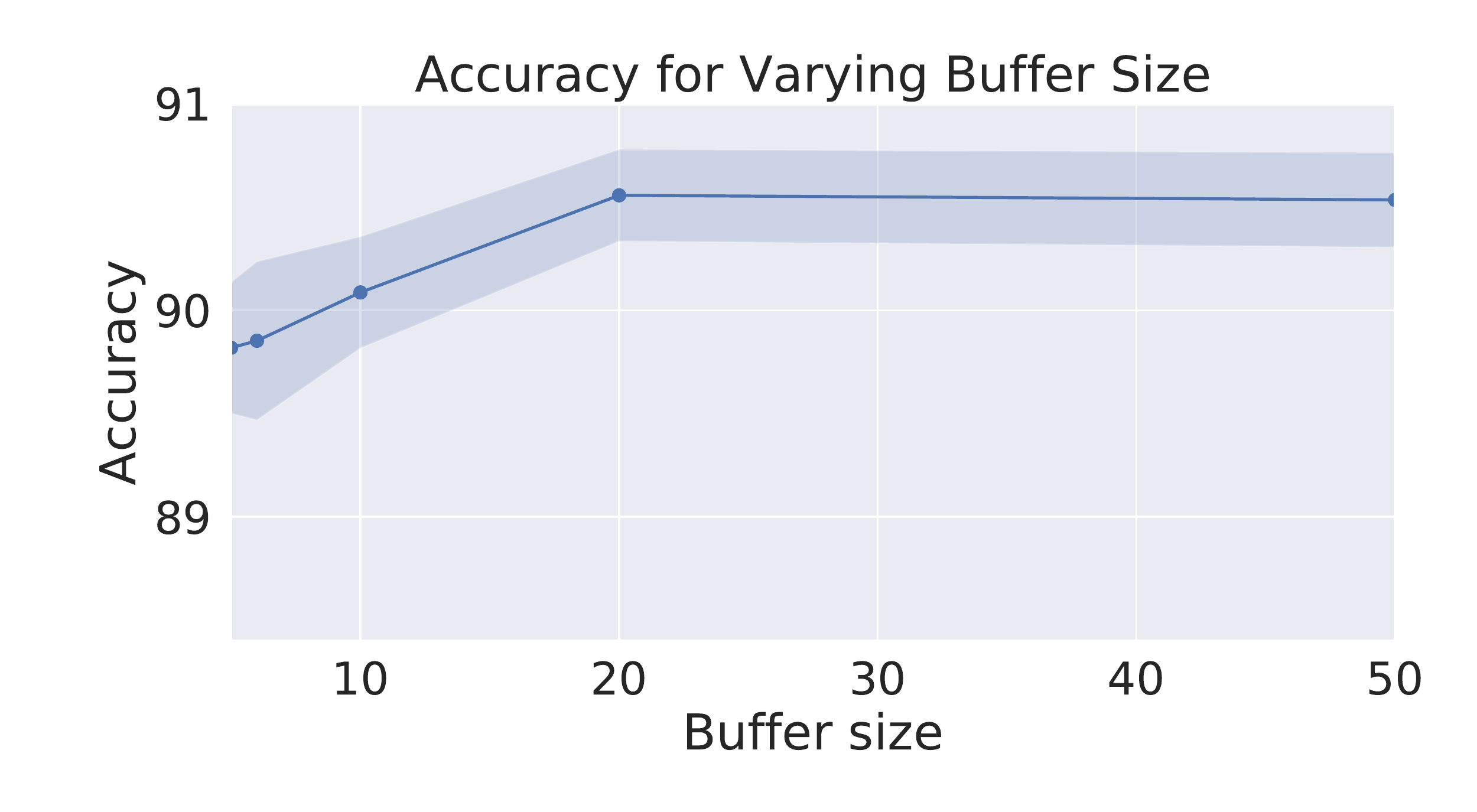}\vspace{-17pt}
\end{center}
\caption{Buffer size vs. Acc. for Async DGL. Smaller buffers produce only small loss in acc.}\label{fig:buffer2}\end{figure} 

We now study the effectiveness of
Alg. \ref{algo:buffer_sym} w.r.t delays. We use a 5 layer CIFAR-10 network with the MLP-aux and with all other architecture and optimization settings as in the auxiliary network experiments of Sec.~\ref{sec:comparisons}.  
%and described by Algorithm ~\ref{algo:buffer_sym}. 
%Layers will be assigned buffers of size $M$. 
Each layer is equipped with a buffer of size $M$.
At each iteration, a layer is chosen according to the pmf $p(j)$, and a batch selected from buffer $B_{j-1}$.
One layer is slowed down by decreasing its selection probability in the pmf $p(j)$ by a factor $S$.
% As mentioned experiments will be performed by modifying the pmf at only one layer using a slowdown factor $S$ that will slowdown making it on average $1/S$ less likely to run than others. 
We evaluate
different slowdown factors (up to $S=2.0$). Accuracy versus $S$ is
shown in Fig. \ref{fig:buffer}. For this experiment we use a buffer of size $M=50$. We run separate experiments with the slowdown applied at each layer of the network as well as 3 
random
seeds for each of these settings (thus 18 experiments per data point). We show the evaluations for 10 values of $S$.
To ensure 
a fair comparison we stop updating layers once they have completed 50 epochs, ensuring and identical number of gradient updates for all layers in all experiments.

In practice one could continue updating until all layers are trained. In Fig.~\ref{fig:buffer} we compare to the synchronous case. First, observe that the accuracy of the synchronous algorithm is maintained in the setting where $S=1.0$ and the pmf is uniform. Note that even this is a non-trivial case, as it will mean that layers inherently have random delays (as compared to  Alg. ~\ref{algo:basic}). Secondly, observe that accuracy is maintained until approximately $1.2\times$ and accuracy losses after that the difference remains small.
Note that even case $S=2.0$ is somewhat drastic: for 50 epochs, the slowed-down layer is only on epoch 25 while those following it are at epoch 50. 

%\vspace*{-10pt}

We now consider the
performance with respect to the buffer size. Results are shown in Fig.~\ref{fig:buffer2}. For this experiment we set $S=1.2\times$. Observe that even a tiny buffer size can yield only a slight loss in performance accuracy. Building on this demonstration there are multiple directions to improve Async DGL with replay.
For example improving the efficiency of the buffer~\cite{oyallon2018compressing}, by including data augmentation in feature space \citep{verma2018manifold}, mixing samples in batches, or improved batch sampling, among others. 
\vspace{-8pt}
\section{Related work}\label{sec:rel}

To the best of our knowledge \citep{jaderberg2016decoupled} is the the first to directly consider the update or forward locking problems in deep feed-forward networks. Other works \citep{Huo2018,ddg} study the backward locking problem. Furthermore, a number of backpropagation alternatives \citep{choromanska2018beyond,dtargetProp,direct_feedback} 
can address backward locking. However, update locking is a more severe inefficiency. Consider the case where each layer's forward processing time is $T_F$ and is equal across a network of $L$ layers.  
Given that the backward pass is a constant multiple in time of the forward, in the most ideal case the backward unlocking will still only scale as $\mathcal{O}(LT_F)$ with $L$ parallel nodes, while update unlocking could scale as $\mathcal{O}(T_F)$.

One class of alternatives to standard back-propagation aims to avoid its biologically implausible aspects, most notably the weight transport problem \citep{bartunov2018assessing,direct_feedback,feedback_alignment,dtargetProp}. Some of these methods \citep{dtargetProp,direct_feedback} can also achieve backward unlocking as they permit all parameters to be updated at the same time, but only once the signal has propagated to the top layer. 
However, they do not solve the update or forward locking problems. Target propagation uses a local auxiliary network as in our approach, for propagating backward optimal activations computed from the layer above.  Feedback alignment replaces the symmetric weights of the backward pass with random weights. Direct feedback alignment extends the idea of feedback alignment passing errors from the top to all layers, potentially enabling simultaneous updates. These approaches have also not been shown to scale to large datasets \citep{bartunov2018assessing}, obtaining only $17.5\%$ top-5 accuracy on ImageNet (reference model achieving $59.8\%$). On the other hand, greedy learning  has been shown to work well on this task \citep{shallow}. We also note concurrent work in the context of biologically plausible models by \citep{nokland2019training} which improves on results from \cite{mostafa2018deep}, showing an approach similar to a specific instantiation of the synchronous version of DGL. This work however does not consider the applications to unlocking nor asynchronous training and cannot currently scale to ImageNet.

Another line of  related work inspired by optimization methods such as \emph{Alternating Direction Method of Multipliers (ADMM)} \citep{admmNet, carreira2014distributed, choromanska2018beyond}  use auxiliary variables to break the optimization into sub-problems. These approaches are fundamentally different from ours as they optimize for the joint training objective, the auxiliary variables providing a link between a layer and its successive layers, whereas we consider a different objective where a layer %critically 
has no dependence on its successors. None of these methods can achieve update or forward unlocking. However, some \citep{choromanska2018beyond} are able to have simultaneous weight updates (backward unlocked). Another issue with ADMM methods is that most of the existing approaches except for \citep{choromanska2018beyond} require standard (``batch'') gradient descent and are thus difficult to scale.
They also often involve an inner minimization problem and have thus not been demonstrated to work on large-scale datasets. Furthermore, none of these have been combined with CNNs.   

Distributed optimization based on data parallelism is a popular area  in machine learning beyond deep learning models and often studied in the convex setting \citep{JMLR:v19:17-650}. For deep network optimization the predominant method
is distributed synchronous SGD \citep{im1hr} and variants, as well as asynchronous  \citep{elasticSGD} variants. Our work is closer to a form of model parallelism rather than data parallelism, and can be easily combined with many data parallel methods (e.g. distributed synchronous SGD). Finally, recent proposals for ``pipelining'' \citep{huang2018gpipe} consider systems level approaches to optimize latency times. These methods do not address the update, forward, locking problems\citep{jaderberg2016decoupled} which are algorithmic constraints of the learning objective and backpropagation. Pipelining can be seen as a  attempting to work around these restrictions, with the fundamental limitations remaining. Removing and reducing update, backward, forward locking would simplify the design and efficiency of such systems-level machinery. Finally, tangential to our work \citet{lee2015deeply} considers auxiliary objectives but with a joint learning objective, which is not capable of addressing any of the problems considered in this work.  

\vspace{-6pt}
\section{Conclusion}

We have analyzed and introduced a simple and strong baseline for parallelizing per layer and per module computations in CNN training. This work matches or exceeds state-of-the-art approaches addressing these problems and is able to scale to much larger datasets than others. Future work can develop improved auxiliary problem objectives and combinations with delayed feedback.
\subsection*{Acknowledgements}
EO acknowledges NVIDIA for its GPU donation. EB acknowledges funding from IVADO. We would like to thank John Zarka, Louis Thiry, Georgios Exarchakis, Fabian Pedregosa, Maxim Berman, Amal Rannen,  Kyle Kastner, Aaron Courville, and Nicolas Pinto for helpful discussions.
\bibliography{bib_greedy}
\bibliographystyle{icml2020}

\newpage
\appendix
\onecolumn
%%%%%%%%%%%%%%%%%%%%%%%%%%%%%%%%%%%%%%%%%%%%%%%%%%%%%%%%%%%%%%%%%%%%%%%%%%%%%%%
%%%%%%%%%%%%%%%%%%%%%%%%%%%%%%%%%%%%%%%%%%%%%%%%%%%%%%%%%%%%%%%%%%%%%%%%%%%%%%%

\section{Proofs}
\label{sec:app_proofs}
\setcounter{section}{4}
\setcounter{prop}{0}
\setcounter{lemma}{0}
\renewcommand{\thelemma}{\arabic{section}.\arabic{lemma}}
\renewcommand{\theprop}{\arabic{section}.\arabic{lemma}}

%In the sequel we might use the following shorthand,  $\nabla\ell_{j,t}=\nabla_{\Theta_j} \ell(Z_{j-1}^t;\Theta_j^t)$. 
\begin{lemma}Under Assumption 3 and 4, one has: $\forall \Theta_j, \mathbb{E}_{p^*_{j-1}}\big[\Vert\nabla_{\Theta_j}\ell(Z_{j-1};\Theta_j)\Vert^2\big]\leq G$.\end{lemma}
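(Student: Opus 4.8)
The plan is to leverage the uniform-in-time bound of Assumption~3 and transfer it to the limiting density $p^*_{j-1}$ using the $L^1$ convergence guaranteed by Assumption~4. First I would fix an arbitrary parameter $\Theta_j$ and observe that Assumption~4, namely $\sum_t c^t_{j-1}<\infty$, forces $c^t_{j-1}\to 0$. Since $c^t_{j-1}=\int|p^t_{j-1}(z)-p^*_{j-1}(z)|\,dz$ is exactly the $L^1$ distance between the two densities, this means $p^t_{j-1}\to p^*_{j-1}$ in $L^1$. A standard fact from measure theory (the Riesz--Fischer argument) then yields a subsequence $(t_k)_k$ along which $p^{t_k}_{j-1}(z)\to p^*_{j-1}(z)$ for almost every $z$.

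The core step is an application of Fatou's lemma to the nonnegative integrands $g_k(z)\triangleq \Vert\nabla_{\Theta_j}\ell(z;\Theta_j)\Vert^2\, p^{t_k}_{j-1}(z)$. For the fixed $\Theta_j$, the map $z\mapsto\Vert\nabla_{\Theta_j}\ell(z;\Theta_j)\Vert^2$ is a fixed nonnegative measurable function, so the almost-everywhere convergence of $p^{t_k}_{j-1}$ gives $g_k(z)\to \Vert\nabla_{\Theta_j}\ell(z;\Theta_j)\Vert^2\, p^*_{j-1}(z)$ a.e. Fatou's lemma then yields
\[
\mathbb{E}_{p^*_{j-1}}\big[\Vert\nabla_{\Theta_j}\ell(Z_{j-1};\Theta_j)\Vert^2\big] = \int \liminf_k g_k(z)\,dz \leq \liminf_k \int g_k(z)\,dz = \liminf_k \mathbb{E}_{p^{t_k}_{j-1}}\big[\Vert\nabla_{\Theta_j}\ell(Z_{j-1};\Theta_j)\Vert^2\big] \leq G,
\]
where the last inequality is Assumption~3, which bounds every term of the sequence by $G$. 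Since $\Theta_j$ was arbitrary, the desired bound holds for all $\Theta_j$.

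The main obstacle is justifying the passage to the limit \emph{without} assuming a uniform (in $z$) bound on the gradient norm. A naive route would control $\big|\mathbb{E}_{p^*_{j-1}}[\cdot]-\mathbb{E}_{p^t_{j-1}}[\cdot]\big|$ by $\sup_z\Vert\nabla_{\Theta_j}\ell(z;\Theta_j)\Vert^2\cdot c^t_{j-1}$, but this requires the integrand to be essentially bounded, which is not among the hypotheses (Assumption~3 only bounds the \emph{expectation} of the squared norm). Exploiting the nonnegativity of $g_k$ and invoking Fatou's lemma along an a.e.-convergent subsequence sidesteps this difficulty and relies solely on Assumptions~3 and~4. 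The only technical point worth a line of justification is the standard extraction of an a.e.-convergent subsequence from $L^1$ convergence.
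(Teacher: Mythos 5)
Your proof is correct and follows essentially the same route as the paper: both hinge on applying Fatou's lemma to $z \mapsto p^t_{j-1}(z)\,\Vert\nabla_{\Theta_j}\ell(z;\Theta_j)\Vert^2$ after securing almost-everywhere convergence of the densities, then invoking Assumption~3 to bound the liminf of the integrals by $G$. The only (immaterial) difference is in how a.e.\ convergence is obtained: the paper uses the summability $\sum_t c^t_{j-1}<\infty$ together with Tonelli's theorem to get a.e.\ convergence of the full sequence, whereas you extract an a.e.-convergent subsequence from the $L^1$ convergence, which suffices equally well for the Fatou step.
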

\begin{proof}
First of all, observe that under Assumption 4 and via Fubini's theorem:
\begin{equation}
    \sum_t c_{j-1}^t = \sum_t \int |p_{j-1}^t(z)-p_{j-1}^*(z)|\,dz=\int\sum_t |p_{j-1}^t(z)-p_{j-1}^*(z)|\,dz <\infty
\end{equation}
thus, $\sum_t |p_j^t-p_j^*|$ is convergent a.s. and $|p_j^t-p_j^*|\rightarrow 0$ a.s as well. From Fatou's lemma, one has:

\begin{align}
    \int p^*_{j-1}(z)\Vert\nabla_{\Theta_j} \ell(z;\Theta_j)\Vert^2 \,dz&= \int \lim\inf_t p^t_{j-1}(z)\Vert\nabla_{\Theta_j} \ell(z;\Theta_j)\Vert^2 \,dz\\ 
    &\leq \lim\inf_t \int  p^t_{j-1}(z)\Vert\nabla_{\Theta_j} \ell(z;\Theta_j)\Vert^2 \,dz\leq G
\end{align}

\if False
then, observe that this implies that:
\begin{equation}
    p_j^t(z)\leq p_j^*(z)+(p_j^t(z)-p_j^*(z)) \leq p_j^*(z)+ |p_j^t(z)-p_j^*(z)|\leq p_j^*(z)+\sum_t |p_j^t(z)-p_j^*(z)|
\end{equation}
thus, $\sup_t p_j^t$ is integrable because the right term is integrable as well. Then, observe that:

\[ \Vert \nabla\ell_{j,t} \Vert |p^*_j(z)-p^t_j(z)| =  \Vert \nabla\ell_{j,t} \Vert 1_{p^*_j(z)< p^t_j(z)}|p^*_j(z)-p^t_j(z)|+ \Vert \nabla\ell_{j,t} \Vert 1_{p^*_j(z)\geq  p^t_j(z)}|p^*_j(z)-p^t_j(z)|\]

Then, the left term is bounded because:

\begin{equation}
    \Vert \nabla\ell_{j,t} \Vert 1_{p^*_j(z)< p^t_j(z)}|p^*_j(z)-p^t_j(z)|\leq \Vert \nabla\ell_{j,t} \Vert 1_{p^*_j(z)< p^t_j(z)}p^t_j(z) \leq \Vert \nabla\ell_{j,t} \Vert 1_{p^*_j(z)< p^t_j(z)}\sup_t p^t_j(z)
\end{equation}

\[\int \sum_t   1_{p^*_j(z)\geq  p^t_j(z)}(p^*_j(z)-p^t_j(z))\,dz\leq \int \sum_t   1_{p^*_j(z)=  p^t_j(z)}|p^*_j(z)-p^t_j(z)|\,dz<\infty\]

In particular:
\[\int \sum_t   1_{p^*_j(z)\geq  p^t_j(z)}p^*_j(z)\,dz<\infty\]

It implies that $\sum_t   1_{p^*_j(z)\geq  p^t_j(z)}$ is almost surely finite, and, a.s. $\forall z,\exists t_0, p^*_j(z)\leq p^{t_0}_j(z)$. In particular this implies that a.s.:
\[\forall z, p^*_j(z)\leq \sup_t p_j^t(z)\]

\fi

\end{proof}

\begin{lemma}Under Assumptions 1, 3 and 4, we have:
\begin{align*}
\mathbb{E}[\mathcal{L}(\Theta_j^{t+1})]\leq\mathbb{E}[\mathcal{L}(\Theta_j^{t})]-\eta_t\big(\mathbb{E}[\Vert\nabla\mathcal{L}(\Theta_j^t)\Vert^2]-\sqrt{2}Gc^t_{j-1}\big)+\frac{LG}{2}\eta_t^2\,,
\end{align*}
Observe that the expectation is taken over each random variable.
%\begin{align*}
%\sum_{t=1}^T\eta_t(1-c_t)\mathbb{E}\big[\Vert\nabla\mathcal{L}(\Theta_t)&\Vert^2\big]\\
%\leq \mathbb{E}[\mathcal{L}(\Theta_0)]-\mathbb{E}[\mathcal{L}(\Theta_T)]+
%&\frac{L}{2}(1+\sup_{t\leq T} c_t)\sum_{t=1}^T\eta_t^2G%\]
%\end{align*}
\end{lemma}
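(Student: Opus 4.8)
The plan is to follow the classical single-step descent analysis for SGD (as in Lemma 4.4 of \citep{bottou2018optimization}), but to carefully track the discrepancy introduced by the fact that the stochastic gradient is sampled from the time-varying density $p_{j-1}^t$, whereas the objective $\mathcal{L}(\Theta_j)=\mathbb{E}_{p^*_{j-1}}[\ell(Z_{j-1};\Theta_j)]$ is defined with respect to the converged density $p^*_{j-1}$. First I would invoke $L$-smoothness (Assumption 1) to obtain the quadratic upper bound $\mathcal{L}(\Theta_j^{t+1}) \le \mathcal{L}(\Theta_j^t) + \langle \nabla\mathcal{L}(\Theta_j^t), \Theta_j^{t+1} - \Theta_j^t\rangle + \frac{L}{2}\|\Theta_j^{t+1}-\Theta_j^t\|^2$. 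Substituting the update $\Theta_j^{t+1}-\Theta_j^t = -\eta_t\nabla_{\Theta_j}\ell(Z_{j-1}^t;\Theta_j^t)$ and taking the conditional expectation over $Z_{j-1}^t\sim p_{j-1}^t$ given $\Theta_j^t$, the quadratic term is controlled directly by Assumption 3, giving $\frac{L\eta_t^2}{2}\,\mathbb{E}_{p_{j-1}^t}[\|\nabla_{\Theta_j}\ell\|^2]\le \frac{LG}{2}\eta_t^2$, which produces the final $\frac{LG}{2}\eta_t^2$ term.

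The crux is the linear (cross) term. Unlike in standard SGD, here $\mathbb{E}_{p_{j-1}^t}[\nabla_{\Theta_j}\ell(Z_{j-1}^t;\Theta_j^t)] = \int p_{j-1}^t\,\nabla_{\Theta_j}\ell\,dz$ is \emph{not} equal to $\nabla\mathcal{L}(\Theta_j^t) = \int p_{j-1}^*\,\nabla_{\Theta_j}\ell\,dz$ (after interchanging gradient and integral, a standard regularity step). I would therefore decompose this expectation as $\nabla\mathcal{L}(\Theta_j^t) + \int(p_{j-1}^t - p_{j-1}^*)\nabla_{\Theta_j}\ell\,dz$, so that the cross term splits into the ideal contribution $-\eta_t\|\nabla\mathcal{L}(\Theta_j^t)\|^2$ plus an error $-\eta_t\langle\nabla\mathcal{L}(\Theta_j^t),\, \int(p_{j-1}^t-p_{j-1}^*)\nabla_{\Theta_j}\ell\,dz\rangle$.

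The main obstacle is bounding this error by a quantity involving $c_{j-1}^t$ and $G$. By Cauchy--Schwarz it is at most $\eta_t\|\nabla\mathcal{L}(\Theta_j^t)\|\cdot\big\|\int(p_{j-1}^t-p_{j-1}^*)\nabla_{\Theta_j}\ell\,dz\big\|$. Writing $p^t,p^*$ for $p_{j-1}^t,p_{j-1}^*$, I would bound the second factor by the integral triangle inequality and then a further Cauchy--Schwarz split $|p^t-p^*| = |p^t-p^*|^{1/2}|p^t-p^*|^{1/2}$, yielding $\big(\int|p^t-p^*|\,dz\big)^{1/2}\big(\int|p^t-p^*|\,\|\nabla_{\Theta_j}\ell\|^2\,dz\big)^{1/2}$. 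Using $|p^t-p^*|\le p^t+p^*$ together with Assumption 3 (expectation under $p^t$) and Lemma 1 (expectation under $p^*$) bounds the second integral by $2G$, so this factor is at most $\sqrt{2G}\,\sqrt{c_{j-1}^t}$. Combining with the Jensen/Lemma-1 bound $\|\nabla\mathcal{L}(\Theta_j^t)\|\le\sqrt{G}$, the error is at most $\eta_t\sqrt{2}\,G\,\sqrt{c_{j-1}^t}$, consistent with the factor $G\sqrt{2c_{j-1}^t}$ that reappears in the subsequent Proposition.

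Finally I would take total expectation over the trajectory via the tower property to pass from the conditional descent inequality to the stated bound $\mathbb{E}[\mathcal{L}(\Theta_j^{t+1})]\le \mathbb{E}[\mathcal{L}(\Theta_j^t)]-\eta_t(\mathbb{E}[\|\nabla\mathcal{L}(\Theta_j^t)\|^2]-\sqrt{2}G\sqrt{c_{j-1}^t})+\frac{LG}{2}\eta_t^2$. I expect the density-mismatch bound in the third paragraph to be the only genuinely new ingredient relative to \citep{bottou2018optimization}; the two applications of Cauchy--Schwarz and the $|p^t-p^*|\le p^t+p^*$ trick are precisely what let the $L^1$ distance $c_{j-1}^t$ absorb the difference between sampling from $p_{j-1}^t$ and evaluating the risk under $p_{j-1}^*$, while reusing both the per-step variance bound (Assumption 3) and its limiting counterpart (Lemma 1).
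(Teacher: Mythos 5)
Your proposal is correct and follows essentially the same route as the paper's own proof: the $L$-smoothness descent step, the bias decomposition of $\mathbb{E}_{p_{j-1}^t}[\nabla_{\Theta_j}\ell]$ against $\nabla\mathcal{L}$, the double Cauchy--Schwarz with the $|p^t-p^*|^{1/2}|p^t-p^*|^{1/2}$ split, the bound $|p^t-p^*|\le p^t+p^*$ combined with Assumption 3 and Lemma 1 to get $2G$, and the Jensen bound $\Vert\nabla\mathcal{L}\Vert\le\sqrt{G}$. Note that your final error term $\sqrt{2}G\sqrt{c_{j-1}^t}$ is exactly what the paper's proof (and its subsequent Proposition and Corollary) actually deliver, whereas the lemma as stated writes $\sqrt{2}Gc_{j-1}^t$ without the square root.
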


\begin{proof}
By $L$-smoothness:
\begin{align}
\mathcal{L}(\Theta_j^{t+1})\leq\mathcal{L}(\Theta_j^t)&+\nabla\mathcal{L}(\Theta_j^t)^{T}(\Theta^{t+1}_{j}-\Theta_{j}^t)+\frac{L}{2}\Vert\Theta_{j}^{t+1}-\Theta_{j}^t\Vert^{2}
\end{align}
Substituting $\Theta_{j}^{t+1}-\Theta_{j}^{t}$ on the right:
\begin{align}
\mathcal{L}(\Theta_j^{t+1})&\leq\mathcal{L}(\Theta_j^t)-\eta_t\nabla\mathcal{L}(\Theta_j^t)^{T} \nabla_{\Theta_j} \ell(Z_{j-1}^t;\Theta_j^t)+\frac{L\eta_t^2}{2}\Vert\nabla_{\Theta_j} \ell(Z_{j-1}^t;\Theta_j^t)\Vert^{2}
\end{align}

Taking the expectation w.r.t. $Z_{j-1}^t$ which has a density $p_{j-1}^t$, we get:
\begin{align*}
\mathbb{E}_{p^t_{j-1}}[\mathcal{L}(\Theta_j^{t+1})]&\leq\mathcal{L}(\Theta_j^t)-\eta_t\nabla\mathcal{L}(\Theta_j^t)^{T}\mathbb{E}_{p_{j-1}^t}[\nabla_{\Theta_j} \ell(Z_{j-1}^t;\Theta_j^t)]
+\frac{L\eta_t^2}{2}\mathbb{E}_{p_{j-1}^t}\big[\Vert\nabla_{\Theta_j} \ell(Z_{j-1}^t;\Theta_j^t)\Vert^{2}\big]
\end{align*}

From Assumption 3, we obtain that:
\begin{equation}
    \frac{L\eta_t^2}{2}\mathbb{E}_{p_{j-1}^t}\big[\Vert \nabla_{\Theta_j} \ell(Z_{j-1}^t;\Theta_j^t)\Vert^{2}\big]\leq \frac{L\eta_t^2G}{2}
\end{equation}

Then, as a side computation, observe that:

\begin{align}
\Vert \mathbb{E}_{p_{j-1}^t}\big[\nabla_{\Theta_j} \ell(Z_{j-1}^t;\Theta_j^t)\big]-\nabla\mathcal{L}(\Theta_j^t)\Vert&=\Vert \int \nabla \ell(z,\Theta_j^t)p^t_{j-1}(z)\,d z -\int \nabla \ell(z,\Theta_j^t)p^*_{j-1}(z)\,d z\Vert\\
&\leq   \int \Vert\nabla \ell(z,\Theta_j^t)\Vert ~ |p^t_{j-1}(z)-p^*_{j-1}(z)|\,d z\\
&=  \int \big(\Vert\nabla \ell(z,\Theta_j^t)\Vert\sqrt{|p^t_{j-1}(z)-p^*_{j-1}(z)|}\big) ~ \sqrt{|p^t_{j-1}(z)-p^*_{j-1}(z)|}\,d z\\
\end{align}
Let us apply the Cauchy-Swchartz inequality, we obtain:

\begin{align}
\Vert \mathbb{E}_{p_{j-1}^t}\big[\nabla_{\Theta_j} \ell(Z_{j-1}^t;\Theta_j^t)\big]-\nabla\mathcal{L}(\Theta_j^t)\Vert&\leq \sqrt{\int \Vert\nabla \ell(z,\Theta_j^t)\Vert^2 |p^t_{j-1}(z)-p^*_{j-1}(z)|\,d z} \sqrt{\int |p^t_{j-1}(z)-p^*_{j-1}(z)|\,d z }\\
&= \sqrt{\int \Vert\nabla \ell(z,\Theta_j^t)\Vert^2 |p^t_{j-1}(z)-p^*_{j-1}(z)|\,d z} \sqrt{c_{j-1}^t}
\end{align}

Then, observe that:

\begin{align}
\int \Vert\nabla \ell(z,\Theta_j^t)\Vert^2 |p^t_{j-1}(z)-p^*_{j-1}(z)|\,d z &\leq  \int \Vert\nabla \ell(z,\Theta_j^t)\Vert^2 \big(p^t_{j-1}(z)+p^*_{j-1}(z)\big)\,d z\\
&=\mathbb{E}_{p^t_{j-1}}[ \Vert\nabla \ell(Z_{j-1},\Theta_j^t)\Vert^2 ]+\mathbb{E}_{p^*_{j-1}}[ \Vert\nabla \ell(Z_{j-1},\Theta_j^t)\Vert^2 ]\\
&\leq 2G
\end{align}

The last inequality follows from Lemma 4.1 and Assumption 3.

Then, using again Cauchy-Schwartz inequality:
\begin{align}
\bigg|\Vert\nabla\mathcal{L}(\Theta_j^t)\Vert^2- \nabla\mathcal{L}(\Theta_j^t)^{T}\mathbb{E}_{p_{j-1}^t}[\nabla_{\Theta_j} \ell(Z_{j-1}^t;\Theta_j^t)]\bigg|
&=\bigg|\nabla\mathcal{L}(\Theta_j^t)^T\big(\nabla\mathcal{L}(\Theta_j^t)-\mathbb{E}_{p_{j-1}^t}[\nabla_{\Theta_j} \ell(Z_{j-1}^t;\Theta_j^t)]\big)\bigg|\\
&\leq \Vert \nabla \mathcal{L}(\Theta^t_j)\Vert~\Vert \mathbb{E}_{p_{j-1}^t}\big[\nabla_{\Theta_j} \ell(Z_{j-1}^t;\Theta_j^t)\big]-\nabla\mathcal{L}(\Theta_j^t)\Vert \\
&\leq \Vert \nabla \mathcal{L}(\Theta^t_j)\Vert \sqrt{2Gc^t_{j-1}}
%&\leq \sqrt{\mathbb{E}\big[\Vert\nabla\mathcal{L}(\Theta_j^t)\Vert^2\big]}\sqrt{2Gc_j^t}\leq \sqrt{2}G\sqrt{c_j^t}\\
\end{align}

Then, taking the expectation leads to

\begin{align}
\big|\mathbb{E}\bigg[\Vert\nabla\mathcal{L}(\Theta_j^t)\Vert^2- \nabla\mathcal{L}(\Theta_j^t)^{T}\mathbb{E}_{p_{j-1}^t}[\nabla_{\Theta_j} \ell(Z_{j-1}^t;\Theta_j^t)]\bigg]\big|&\leq \mathbb{E}[\bigg|\Vert\nabla\mathcal{L}(\Theta_j^t)\Vert^2- \nabla\mathcal{L}(\Theta_j^t)^{T}\mathbb{E}_{p_{j-1}^t}[\nabla_{\Theta_j} \ell(Z_{j-1}^t;\Theta_j^t)]\bigg|]\\
&\leq \mathbb{E}[\Vert\nabla \mathcal{L}(\Theta^t_j)\Vert] \sqrt{2Gc^t_{j-1}}\\
&\leq \sqrt{ \mathbb{E}[\Vert\nabla \mathcal{L}(\Theta^t_j)\Vert^2] }\sqrt{2Gc^t_{j-1}}\\
%&\leq G\sqrt{2c_j^t}
\end{align}
However, observe that by Lemma 4.1 and Jensen inequality:

\begin{align}
\Vert \nabla\mathcal{L}(\Theta_j^t)\Vert^2 =\Vert \mathbb{E}_{p^*_j}[\nabla_{\Theta_j} \ell(Z,\Theta_j^t)]\Vert^2\leq \mathbb{E}_{p^*_j}[ \Vert \nabla_{\Theta_j} \ell(Z,\Theta_j^t)\Vert^2]\leq G
\end{align}
%where Jensen inequality was used.
Combining this inequality and Assumption 3, we get:
\begin{align*}
\mathbb{E}[\mathcal{L}(\Theta_j^{t+1})]\leq\mathbb{E}[\mathcal{L}(\Theta_j^{t})]-\eta_t\big(\mathbb{E}[\Vert\nabla\mathcal{L}(\Theta_j^t)\Vert^2]-\sqrt{2}Gc^t_{j-1}\big)+\frac{LG}{2}\eta_t^2\,,
\end{align*}

\end{proof}

\begin{prop}
Under Assumptions 1, 2, 3 and 4, each term of the following equation converges:
\begin{align}
\hspace{-0.2cm}
\begin{split}
\sum_{t=0}^T \eta_t
\mathbb{E}[\Vert\nabla\mathcal{L}(\Theta_j^t)\Vert^2] \leq  \mathbb{E}[\mathcal{L}(\Theta^0_j)] +G\sum_{t=0}^T\eta_t(\sqrt{2c_{j-1}^t}+\frac{L\eta_t}{2})
\end{split}
\end{align}
%\[\sum_{t=0}^\infty  \eta_t \mathbb{E}[\Vert\nabla\mathcal{L}(\Theta_j^t)\Vert^2]<\infty\, \]
\end{prop}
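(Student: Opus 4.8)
The plan is to obtain the stated inequality by telescoping the one-step descent bound of Lemma~\ref{lemma:main}, and then to read off convergence of each summand directly from the two summability hypotheses (Assumptions 2 and 4).

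First I would rearrange Lemma~\ref{lemma:main} so that the gradient-norm term stands alone on the left, yielding a per-step bound of the form
\begin{equation*}
\eta_t\,\mathbb{E}[\Vert\nabla\mathcal{L}(\Theta_j^t)\Vert^2]\leq \mathbb{E}[\mathcal{L}(\Theta_j^{t})]-\mathbb{E}[\mathcal{L}(\Theta_j^{t+1})]+G\eta_t\sqrt{2c_{j-1}^t}+\frac{LG}{2}\eta_t^2\,.
\end{equation*}
Summing over $t=0,\dots,T$ collapses the first two right-hand terms into the telescoping difference $\mathbb{E}[\mathcal{L}(\Theta_j^{0})]-\mathbb{E}[\mathcal{L}(\Theta_j^{T+1})]$. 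Since $\ell$ is a non-negative loss, $\mathcal{L}$ is non-negative, so $\mathbb{E}[\mathcal{L}(\Theta_j^{T+1})]\geq 0$ and can be discarded, leaving exactly the claimed inequality with right-hand side $\mathbb{E}[\mathcal{L}(\Theta^0_j)]+G\sum_t\eta_t(\sqrt{2c_{j-1}^t}+\tfrac{L\eta_t}{2})$.

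It then remains to argue that this right-hand side stays bounded as $T\to\infty$, which forces the non-decreasing, non-negative left-hand partial sums to converge as well. The term $\sum_t\eta_t^2$ is finite by the Robbins--Monro condition (Assumption 2). For the cross term I would apply Cauchy--Schwarz in the form
\begin{equation*}
\sum_{t=0}^T\eta_t\sqrt{c_{j-1}^t}\leq\Big(\sum_{t=0}^T\eta_t^2\Big)^{1/2}\Big(\sum_{t=0}^T c_{j-1}^t\Big)^{1/2}\,,
\end{equation*}
whose right-hand side is finite by Assumption 2 together with the convergence-of-the-previous-layer hypothesis $\sum_t c_{j-1}^t<\infty$ (Assumption 4). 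Hence each sum appearing on the right converges, and therefore so does the left-hand sum.

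The argument is essentially bookkeeping once Lemma~\ref{lemma:main} is available; the only step demanding slight care is the cross term, where the square-root dependence on $c_{j-1}^t$ must be matched against the square-summability of $\{\eta_t\}$ through Cauchy--Schwarz rather than a cruder termwise bound. I do not expect a genuine obstacle beyond verifying that non-negativity of $\mathcal{L}$ legitimately licenses dropping the terminal expectation.
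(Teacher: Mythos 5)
Your proof is correct and follows essentially the same route as the paper's: telescope the one-step bound of Lemma~\ref{lemma:main}, discard the terminal expectation $\mathbb{E}[\mathcal{L}(\Theta_j^{T+1})]\geq 0$ using non-negativity of the loss, and conclude boundedness of the right-hand side from $\sum_t\eta_t^2<\infty$ and $\sum_t c_{j-1}^t<\infty$. The paper simply asserts that $\sum_t\eta_t\sqrt{c_{j-1}^t}$ converges because those two series do; your Cauchy--Schwarz step is exactly the justification it leaves implicit.
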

\begin{proof} Applying Lemma \ref{lemma:main} for $t=0,...,T-1$, we obtain (observe the telescoping sum), for our non-negative loss:
\begin{align}\sum_{t=0}^T \eta_t \mathbb{E}[\Vert\nabla\mathcal{L}(\Theta_j^t)\Vert^2]& \leq  \mathbb{E}[\mathcal{L}(\Theta^0_j)]-\mathbb{E}[\mathcal{L}(\Theta^{T+1}_j)] +\sqrt 2G\sum_{t=0}^T\sqrt{c_j^t}\eta_t +\frac{LG}{2}\sum_{t=0}^T \eta_t^2\\
&\leq \mathbb{E}[\mathcal{L}(\Theta^0_j)] +\sqrt 2G\sum_{t=0}^T\sqrt{c_j^t}\eta_t +\frac{LG}{2}\sum_{t=0}^T \eta_t^2\\
\end{align}

Yet, $\sum \sqrt{c_j^t}\eta_t$ is convergent, as $\sum c_j^t$  and $\sum_t \eta_t^2$ are convergent, thus the right term is bounded.
\end{proof}

%differences
%DGL is by design faster than DNI/[2a&b] given same splits and #epochs.Their relative speeds are governed by:(a)computation besides forward/back of main module(e.g. aux net)(b)comm. output activation->next module (c)comm. sending feedback->previous. (d)(only [2a&b]) wait for signal to reach top(see L77-91L for more detail).For DGL (c-d) do not exist and (b) is cost IDENTICAL for all methods!Thus we focus on key factor (a)(design of negligible aux, pg 6).DNI lacks (d) but (a) is substantial(L237-254R) and (c) present.Thus DGL is faster given same impl for (b).Exact comparisons in par env are heavily impl-dependent and DNI didn’t evaluate in parallel env as acc models/proof of concept still main focus

\setcounter{section}{1}
\section{Additional Descriptions of Experiments}
Here we provide some additional details of the experiments. Code for experiments is provided along with the supplementary materials.
\paragraph{Comparisons to DNI} The comparison to DNI attempts to directly replicate the the Appendix C.1 \citep{jaderberg2016decoupled}. Although the baseline accuracies for backprop and cDNI are close to those reported in the original work, those of DNI are worse than those reported in \citep{jaderberg2016decoupled}, which could be due to minor differences in the implementation. We utilize a popular pytorch DNI implementation available and source code will be provided.

\paragraph{Auxiliary network study} We use SGD with momentum of $0.9$ and weight decay $5\times 10^{-4}$  \citep{zagoruyko2016wide} and a short schedule of 50 epochs and decay factor of $0.2$ every 15 epochs \citep{shallow}. 
\paragraph{Sequential vs Greedy optimization experiments} We use the same architecture and optimization as in the Auxiliary network study

\paragraph{Imagenet} We use the shortened optimization schedule prescribed in \citep{Xiao2019}. It consists of training for 50 epochs with mini-batch size $256$, uses SGD with momentum of 0.9,  weight decay of $10^{-4}$, and a learning rate of $0.1$ reduced by a factor 10 every 10 epochs.

\section{Detailed Discussion of Relative Speed of Competing Methods}\label{appendix:speed}
Here we describe in more detail the elements governing differences between methods such as DNI\citep{jaderberg2016decoupled}, DDG/FA\citep{Huo2018}, and the simpler DGL. We will argue that if we take the assumption that each approach runs for the same number of epochs or iterations and applies the same splits of the network then DGL is by construction faster than the other methods which rely on feedback.   
The relative speeds of these methods are governed by the following:
\begin{enumerate}
    \item  Computation besides forward and backward passes on primary network modules (e.g. auxiliary networks forward and backward passes) 
    \item Communication time of sending activations from one module to the next module
    \item Communication time of sending feedback to the previous module 
    \item Waiting time for signal to reach final module
\end{enumerate}

As discussed in the text our auxiliary modules which govern (1) for DGL are negligible thus the overhead of (1) is negligible. DNI will inherently have large auxiliary models as it must predict gradients, thus (1) will be much greater than in DGL. (2) should be of equal speed across all methods given the same implementation and hardware. (3) does not exist for the case of DGL but exists for all other cases.  
(4) applies only in the case of backward unlocking methods (DDG/FA) and does not exist for DNI or DGL as they are update unlocked.   

Thus we observe that DGL by construction is faster than the other methods. We note that for use cases in multi-GPU settings communication would need to be well optimized for use of any of these methods. Although we include a parallel implementation based on the software from \citep{ddg}, an optimized distributed implementations of the ideas presented here and related works is outside of the scope of this work.  
%\if False
%\paragraph{Sequential vs Parallel Optimization} For further illustration we plot the curves in both cases for the parallel and sequential optimization. We can observe the dynamics in the parallel case lead to similar result in the same number of iterations. 
%\begin{figure}
 %   \centering
  %  \includegraphics[scale=0.2]{}
   % \includegraphics[scale=0.2]{}
    %\caption{Comparison of DGL and Sequential optimization for 4 layers. Although in the sequential scheme each layer starts at a better solution, in the parallel scheme the dynamics allow the loss to reach nearly the same level as in the sequential scheme in the same number of iterations (while permitting parallelization)}
    %\label{fig:my_label}
%\end{figure}
%\fi 

\subsection{Auxiliary Network Sizes and FLOP comparisons on ImageNet}
\begin{table}[]
    \centering
    \begin{tabular}{c|c|c}
       & Flops Net & Flops Aux\\\hline
     VGG-13 ($K=4$) &13 GFLOPs & 0.2  GFLOP\\\hline
     VGG-19 ($K=4$) &20 GFLOPs & 0.2 GFLOP\\\hline
     ResNet-152 ($K=2$)  &   11 GFLOP &  0.02 GFLOP\\\hline
    \end{tabular}

    \caption{ImageNet comparisons of FLOPs for auxiliary model in major models trained. Auxiliary networks are negligible.}
    \label{tab:flop_im}
\end{table}
We briefly illustrate the sizes of auxiliary networks. Lets take as an example the ImageNet experiments for VGG-13. At the first layer the output is $224\times224\times64$. The MLP-aux here would be applied after averaging to $2\times2\times 64$, and would consists of 3 fully connected layers of size $256$ ($2*2*64$) followed by a projection to $1000$ image categories. The MLP-SR-aux network used would first reduce to $56\times56\times64$ and then apply 3 layers of $1\times1$ convolutions of width 64. This is followed by reduction to $2\times2$ and 3 FC layers as in the MLP-aux network. As mentioned in Sec. ~\ref{sec:imagenet} the auxiliary networks are neglibile in size. We further illustrate this in \ref{tab:flop_im}.

\section{Additional pseudo-code}\label{appendix:pseudo}
To illustrate the parallel implementations of the Algorithms we show a different pseudocode implementation with an explicit behavior for each worker specified. The following Algorithm~\ref{_algo:basic_parallel} is equivalent to Algorithm~\ref{algo:basic} in terms of output but directly illustrates a parallel implementation. Similarly ~\ref{algo:buff_para} illustrates a parallel implementation of the algorithm described in Algorithm~\ref{algo:buffer_sym}. The probabilities used in Algorithm~\ref{algo:buff_para} are not included here as they are derived from communication and computation speed differences. Finally we illustrate the parallelism compared to backprop in \ref{fig:update_lock_greedy}

    \begin{figure*}[t]
        \centering
        \includegraphics[width=0.6\linewidth]{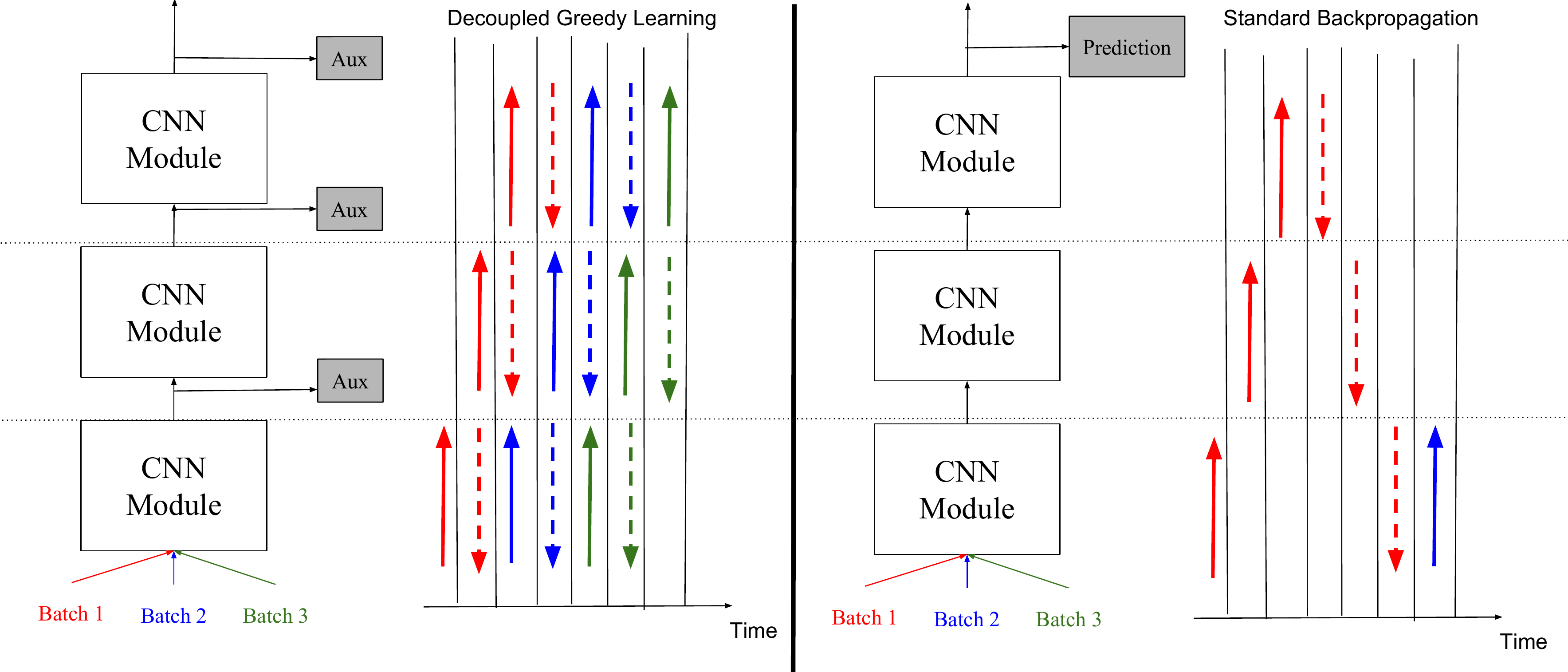}
        \caption{We illustrate the signal propagation for three mini-batches processed by standard back-propagation and with decoupled greedy learning. In each case a module can begin processing forward and backward passes as soon as possible. For illustration we assume same speed for forward and backward passes, and discount the auxiliary network computation (negligible here).} 
    \label{fig:update_lock_greedy}
    \end{figure*}
    
\begin{figure*}[h]
\begin{minipage}{\textwidth}
\begin{minipage}[b]{0.51\textwidth}
\begin{algorithm2e}[H]\small
\caption{DGL Parallel Implementation\label{_algo:basic_parallel}}
  \SetAlgoLined
  \DontPrintSemicolon
 \KwIn{Stream $\mathcal{S}\triangleq\{(x_0^t,y^t)\}_{t\leq T}$ of samples or mini-batches;}
\textbf{Initialize} Parameters $\{\theta_j,\gamma_j\}_{j\leq J}$\\
Worker 0:\;
\For {$x_0^t \in \mathcal{S}$}{
$x^t_1 \leftarrow f_{\theta^t_{0}}(x^t_{0})$\\
Send $x_0^t$ to worker $1$\;
Compute $\nabla_{(\gamma_1,\theta_1)}\hat{ \mathcal{L}}(y^t,x^t_0;\gamma^t_0,\theta^t_0)$\;
$(\theta^{t+1}_0,\gamma^{t+1}_0)\leftarrow$ Step of parameters $(\theta^t_0,\gamma^t_0)$}
Worker $j$:\;
\For{$t \in 0 ... T$}{
Wait until $x^t_{j-1}$ is available\;
$x^t_j \leftarrow f_{\theta^t_{j-1}}(x^t_{j-1})$ \\
Compute $\nabla_{(\gamma_j,\theta_j)}\hat{ \mathcal{L}}(y^t,x^t_j;\gamma^t_j,\theta^t_j)$\\
Send $x^t_{j}$ to worker $x^t_{j+1}$\\
$(\theta^{t+1}_j,\gamma^{t+1}_j)\leftarrow$ Step of parameters $(\theta^t_j,\gamma^t_j)$}
\end{algorithm2e}
\end{minipage}
\hfill
\begin{minipage}[b]{0.46\textwidth}
\begin{algorithm2e}[H]
\small\caption{DGL Async Buffer Parallel Impl.}\label{algo:buff_para}
\SetAlgoLined
  \DontPrintSemicolon
    \KwIn{Stream $\mathcal{S}\triangleq\{(x_0^t,y^t)\}_{t\leq T}$;  Distribution of the delay $p=\{p_j\}_{j}$; Buffer size $M$ }
 \textbf{Initialize:} Buffers $\{B_j\}_{j}$ with size $M$; params $\{\theta_j,\gamma_j\}_{j}$\\
 Worker $j$: \\
\While{\normalfont{\textbf{ training}}}{
    \lIf{ $ j=1$}{ $ (x_{0},y)\gets \mathcal{S}$} \lElse{ $(x_{j-1},y)\gets B_{j-1}$}\;
    $x_j \leftarrow f_{\theta_{j-1}}(x_{j-1})$\;
    Compute $\nabla_{(\gamma_j,\theta_j)}\hat{ \mathcal{L}}(y,x_j;\gamma_j,\theta_j)$\;
     $(\theta_j,\gamma_j)\leftarrow$ Step of parameters $(\theta_j,\gamma_j)$\;
    \lIf{$j<J$}{
    $B_{j} \gets (x^{j},y)$
           }}
\end{algorithm2e}
\end{minipage}
\end{minipage}
\end{figure*}

\end{document}